\documentclass{article} 
\usepackage{iclr2026_conference,times}

\usepackage{amsmath,amsfonts,bm}

\def\eqref#1{equation~\ref{#1}}

\def\1{\bm{1}}

\def\vh{{\bm{h}}}

\def\vp{{\bm{p}}}

\def\vw{{\bm{w}}}
\def\vx{{\bm{x}}}

\def\vz{{\bm{z}}}

\def\mD{{\bm{D}}}

\def\mG{{\bm{G}}}
\def\mH{{\bm{H}}}
\def\mI{{\bm{I}}}

\def\mK{{\bm{K}}}

\def\mM{{\bm{M}}}

\def\mO{{\bm{O}}}
\def\mP{{\bm{P}}}
\def\mQ{{\bm{Q}}}
\def\mR{{\bm{R}}}

\def\mV{{\bm{V}}}
\def\mW{{\bm{W}}}
\def\mX{{\bm{X}}}

\DeclareMathAlphabet{\mathsfit}{\encodingdefault}{\sfdefault}{m}{sl}
\SetMathAlphabet{\mathsfit}{bold}{\encodingdefault}{\sfdefault}{bx}{n}

\newcommand{\softmax}{\mathrm{softmax}}

\usepackage{hyperref}
\usepackage{url}
\usepackage{xspace}
\usepackage{booktabs}
\usepackage{multirow}
\usepackage{multicol}
\usepackage{graphicx}

\usepackage{wrapfig}
\usepackage{amsthm}

\newtheorem{lemma}{Lemma}[section]
\newtheorem{proposition}{Proposition}[section]

\usepackage{pifont}
\newcommand{\cmark}{\ding{51}}%
\newcommand{\xmark}{\ding{55}}%

\title{Learning Molecular Chirality via Chiral Determinant Kernels}

\author{
Runhan Shi$^{1}$, 
Zhicheng Zhang$^{1}$, 
Letian Chen$^{1,2}$,
Gufeng Yu$^{1}$, 
Yang Yang$^{1}$\thanks{Corresponding author.} \\
$^{1}$AGI Institute, School of Computer Science, Shanghai Jiao Tong University \\
$^{2}$Shanghai Innovation Institute \\
\texttt{\{han.run.jiangming, zhicheng.zhang, clt2001, jm5820zz\}@sjtu.edu.cn} \\
\texttt{yangyang@cs.sjtu.edu.cn}
}

\newcommand{\model}{ChiDeK\xspace}
\newcommand{\data}{ACMP\xspace}

\iclrfinalcopy 
\begin{document}

\maketitle
\begin{abstract}
Chirality is a fundamental molecular property that governs stereospecific behavior in chemistry and biology. Capturing chirality in machine learning models remains challenging due to the geometric complexity of stereochemical relationships and the limitations of traditional molecular representations that often lack explicit stereochemical encoding. Existing approaches to chiral molecular representation primarily focus on central chirality, relying on handcrafted stereochemical tags or limited 3D encodings, and thus fail to generalize to more complex forms such as axial chirality. In this work, we introduce \textbf{ChiDeK} (\textbf{Chi}ral \textbf{De}terminant \textbf{K}ernels), a framework that systematically integrates stereogenic information into molecular representation learning. We propose the chiral determinant kernel to encode the SE(3)-invariant chirality matrix and employ cross-attention to integrate stereochemical information from local chiral centers into the global molecular representation. This design enables explicit modeling of chiral-related features within a unified architecture, capable of jointly encoding central and axial chirality. To support the evaluation of axial chirality, we construct a new benchmark for electronic circular dichroism (ECD) and optical rotation (OR) prediction. Across four tasks, including R/S configuration classification, enantiomer ranking, ECD spectrum prediction, and OR prediction, ChiDeK achieves substantial improvements over state-of-the-art baselines, most notably yielding over 7\% higher accuracy on axially chiral tasks on average.
\end{abstract}

\section{Introduction}

Chirality, the property of molecules that are non-superimposable mirror images, is a fundamental concept in chemistry, biology, and materials science \citep{chirality}. It plays a central role in shaping molecular behavior across diverse applications, including biomolecular recognition \citep{drug_1}, stereoselective synthesis \citep{synthesis}, and enantioselective catalysis \citep{catalyst_0}. Chiral molecules, often existing as pairs of enantiomers, can exhibit drastically different chemical and biological properties despite sharing identical compositions \citep{intramolecular_1}. Such differences are especially critical in drug efficacy \citep{disease_0}, toxicity \citep{drug_0}, and protein-binding affinity \citep{chiral_cliff, binding_1}, underscoring the paramount importance of accurately modeling stereochemical information in molecular representation learning.

Despite this significance, capturing chirality in machine learning models remains challenging. Existing molecular representation learning (MRL) approaches predominantly focus on central chirality and often rely on implicit encoding strategies that fail to extract explicit stereochemical features. While recent advances in SE(3)-invariant architectures have shown promise through the incorporation of torsion angles \citep{klicpera2021gemnet,liu2022spherical} and explicitly chirality-aware designs \citep{adams2022learning,Yan2025}, research on comprehensive chirality modeling within MRL remains limited. More complex stereogenic forms, such as axial, planar, and helical chirality, are largely unaddressed, representing a significant gap in current methodologies.

Among the various forms of molecular chirality, central chirality and axial chirality represent the two most prevalent categories. \textbf{Central chirality} arises when a tetrahedral atom (typically carbon) is bonded to four distinct substituents. Such an atom constitutes a \emph{chiral center}, and its stereochemical configuration (R or S) \citep{cip, iupac2013nomenclature} is determined by the spatial arrangement of its substituents, as illustrated in Figure~\ref{fig:intro}(a). \textbf{Axial chirality} typically results from restricted rotation around a chemical axis, often a bond connecting two aromatic or aliphatic groups. Here, chirality (Ra/M or Sa/P) is determined by the relative arrangement of substituent groups around the stereogenic axis, as shown in Figure~\ref{fig:intro}(b). 

Existing computational approaches encounter fundamental limitations in capturing chirality. E(3)-invariant architectures that rely exclusively on pairwise distances or bond angles, such as SchNet \citep{schutt2017schnet} and DimeNet \citep{gasteiger_dimenet_2020, gasteiger_dimenetpp_2020}, are inherently incapable of distinguishing enantiomers since these geometric quantities remain unchanged under reflection. While SE(3)-invariant models, including SphereNet \citep{liu2022spherical}, ChIRo \citep{adams2022learning}, and ChiGNN \citep{Yan2025}, address this limitation through directional information and equivariant features, they face critical challenges. Most notably, molecular graphs are typically treated as homogeneous structures where chiral atoms are not explicitly differentiated, causing stereochemical signals to be diluted during global aggregation. Recent approaches like ChiGNN attempt to incorporate chirality through permutation-resolution strategies, but they still fail to extract expressive, localized descriptors that faithfully characterize chiral centers or stereogenic axes. These constraints highlight the urgent need for a unified framework capable of systematically modeling diverse chirality types while explicitly encoding stereochemical features that govern molecular behavior.

\begin{figure*}[t!]
  \centering
  \includegraphics[width=\linewidth]{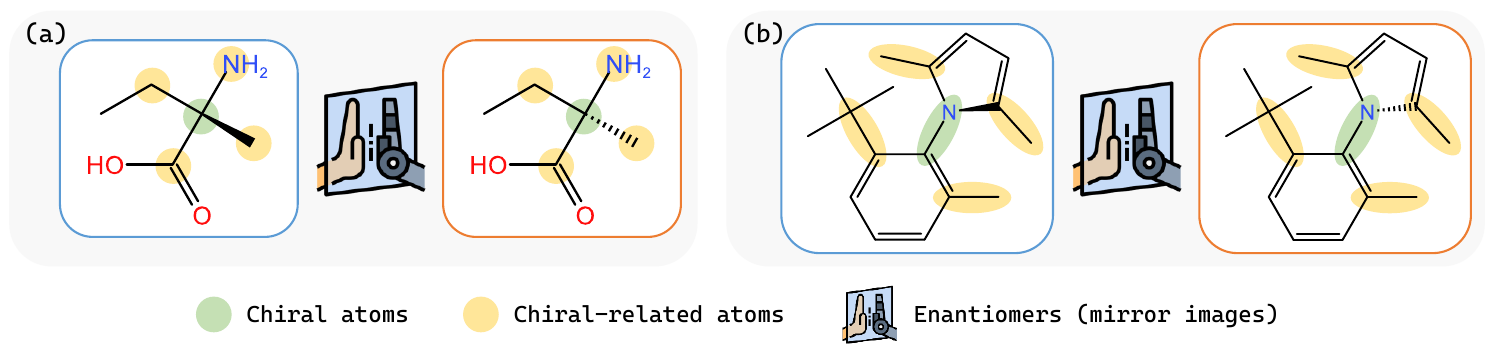}
  \caption{Examples of central chirality (a) and axial chirality (b). The configuration of the chiral atoms (in green) is determined by the spatial arrangement of chiral-related atoms (in orange).}
  \label{fig:intro}
\end{figure*}

To address the above limitations, we propose the \textbf{\model} model (\textbf{Chi}ral \textbf{De}terminant \textbf{K}ernels), which learns molecular chirality based on the determinant of the chirality matrix and applies cross-attention between chiral and non-chiral atoms to explicitly capture stereogenic information. The introduced chiral determinant kernels embed the SE(3)-invariant chirality matrix \citep{Shi2025ChiralFinder} for each chiral atom, capturing stereogenic information for both central and axial configurations. A cross-attention mechanism enhanced by pair representations then propagates the influence of chiral atoms (queries) with chiral-related and non-chiral atoms (keys and values) into the global molecular representation, enhancing stereochemical sensitivity. 
Our main contributions are:
\begin{itemize}
    \item We introduce \model, a unified architecture that systematically encodes both central and axial chirality, and evaluate it comprehensively on multiple chirality-aware prediction tasks.
    \item We design the chiral determinant kernel for chiral atoms that embeds the SE(3)-invariant chirality matrix, effectively capturing stereogenic features for chirality.
    \item We construct and release a dataset of axially chiral molecules for the prediction of electronic circular dichroism (ECD) and optical rotation (OR), providing a benchmark for this underexplored stereogenic type.
    \item Across multiple tasks, including R/S classification, enantiomer ranking, ECD prediction, and OR prediction, \model demonstrates substantial improvements over state-of-the-art baselines, particularly for the axial chirality task.
\end{itemize}

\section{Related Work}
\textbf{Learning 3D Representations of Molecules.} 
Recent advances in molecular machine learning have heavily leveraged 3D geometric message passing and attention architectures, where interatomic interactions are modeled via internal coordinates such as distances, angles, and torsions. 
SchNet \citep{schutt2017schnet}, a pioneering architecture for predicting quantum mechanical properties, parameterizes messages using radial basis expansions of pairwise distances.  
DimeNet \citep{gasteiger_dimenet_2020} and DimeNet++ \citep{gasteiger_dimenetpp_2020} extend this framework by incorporating angular features through spherical Bessel functions, while DeepH-E3 \citep{gong2023general} learns Hamiltonians from density functional theory (DFT) calculations.  
BOTNet \citep{Batatia2025} further simplifies neural equivariant interatomic potentials (NequIP) \citep{batzner2022e3}, a high-performing equivariant message passing framework. 
These models are designed to be E(3)-invariant, which are robust to translations, rotations, and reflections. While this invariance is desirable for many property prediction tasks, it enforces reflection-symmetry, rendering such models unable to distinguish enantiomers that differ only by mirror inversion \citep{satorras2021en, dumitrescu2025eequivariant}.  

To remedy this shortcoming, SE(3)-invariant architectures have been developed, which maintain sensitivity to reflections. Tensor Field Networks \citep{Thomas2018TensorFN} and SE(3)-Transformers \citep{fuchs2020se3transformers} achieve equivariance through the use of spherical harmonics and irreducible representations. Path-MPNN \citep{flam2021neural}, GemNet \citep{klicpera2021gemnet}, and SphereNet \citep{liu2022spherical} further enhance 3D GNNs by embedding angular and torsional information. More recently, MoleculeSED \citep{10.5555/3618408.3619296} introduces reflection-sensitive stochastic differential equations for molecular modeling.  

\textbf{Explicit Representation Learning of Molecular Chirality.}  
A number of studies explicitly incorporate stereochemical information into molecular representations. Early efforts introduce handcrafted descriptors such as the chirality code \citep{desc_0}, physicochemical atomic stereo (PAS) descriptors \citep{desc_1}, relative chirality indices (RCI) \citep{desc_2}, and chiral cliffs \citep{chiral_cliff}. More recently, MAP4C \citep{Orsi2024} extends molecular fingerprints to capture stereochemical tags directly. 
Linear notations such as SMILES \citep{smiles}, SELFIES \citep{krenn2022selfies}, and fragSMILES \citep{mastrolorito2025enhancing} encode molecules as sequences that can include chirality tags, enabling models like Transformers \citep{Vaswani2017AttentionIA} to process stereochemical information \citep{yoshikai2024difficulty, Yang2025BatGPT}. However, these tags serve only as symbolic indicators without explicit 3D structural context, making the resulting representations less informative for capturing stereogenic geometry. 
Graph neural networks have also been adapted to represent chirality more explicitly. Tetra-DMPNN \citep{Pattanaik2020MessagePN} modifies message passing in 2D GNNs with asymmetric updates to encode tetrahedral chirality. SPMS \citep{xu2021molecular} introduces spherical projection descriptors with convolutional architectures for stereostructure modeling. ChIRo \citep{adams2022learning} learns 3D representations invariant to bond rotations yet sensitive to stereoisomers. MolKGNN \citep{Liu_Wang_Vu_Moretti_Bodenheimer_Meiler_Derr_2023} leverages the tetrahedron volume calculation to capture the neighbor ordering for chirality. CFFN \citep{du2023fusing} combines 2D graph topology and 3D geometry to build stereochemistry-aware molecular representations. GCPNet \citep{morehead2024geometry} develops geometric representations tailored for biomolecular graphs. ChiENN \citep{chienn} and ChiGNN \citep{Yan2025} explicitly consider atom permutations related to chirality to improve stereochemical discrimination. ChiralCat \citep{PENG2025100091} integrates 3D molecular structures with LLM-generated textual descriptions to perform chiral type classification. 

Despite these advances, the vast majority of prior work focuses primarily on central chirality. Other stereogenic forms, such as axial chirality, remain largely underexplored, limiting current models' ability to generalize across the full spectrum of stereochemistry. 
Although methods such as SPMS are theoretically capable of representing additional forms of chirality, they have not been systematically evaluated in these settings, leaving their practical effectiveness uncertain.

\section{Preliminaries}

\subsection{Chiral and Chiral-related Atoms}
To construct a meaningful molecular representation that explicitly encodes chirality, we distinguish between \emph{chiral atoms} and their corresponding \emph{chiral-related atoms} (Figure~\ref{fig:intro}).  
Given a molecule $\vz$ with $M$ atoms, where the index set of atoms is $\mathcal{I}=\{1,\dots,M\}$, we define three disjoint subsets :
\begin{equation}
\mathcal{I}_c, \quad \mathcal{I}_r = \bigcup_{i \in \mathcal{I}_c} \{j:j \in \mathcal{N}(i) \} \setminus \mathcal{I}_c, \quad \mathcal{I}_n = \mathcal{I} \setminus (\mathcal{I}_c \cup \mathcal{I}_r),
\end{equation}
where $\mathcal{I}_c$, $\mathcal{I}_r$, and $\mathcal{I}_n\subset\mathcal{I}$ are the index sets of chiral, chiral-related, and non-chiral atoms, respectively, and $\mathcal{N}(i)$ denotes the index set of substituents around a chiral atom $i$.

For central chirality, a chiral atom $i\in\mathcal{I}_c$ typically has four different substituent groups. The corresponding four neighboring atoms define its chiral-related set $\mathcal{N}(i)$.  
For axial chirality, chirality arises along a stereogenic axis (e.g., in biaryls), where steric hindrance prevents free rotation. In this case, the substituent groups are partitioned into four distinct sets around the axis, and we select the nearest atom from each group as the chiral-related atoms. 

\subsection{Chirality matrix}
To explicitly encode stereogenic information, we adopt the chirality matrix introduced in ChiralFinder \citep{Shi2025ChiralFinder}. Formally, for a chiral atom $i \in \mathcal{I}_c$ with ordered substituents $\mathcal{N}(i) = (r_{i,1}, r_{i,2}, r_{i,3}, r_{i,4})$, the \emph{chirality matrix} is defined as
\begin{equation}\label{equ:chirality_matrix}
    \mM_\mathrm{C}(i) :=
    \begin{pmatrix}
        (\vx_{r_{i,1}} - \vx_i) &
        (\vx_{r_{i,2}} - \vx_i) &
        (\vx_{r_{i,4}} - \vx_{r_{i,3}})
    \end{pmatrix}^{\!\top},
\end{equation}
where $\vx_i \in \mathbb{R}^3$ denotes the 3D position of the chiral atom $i$, and $\vx_{r_{i,j}}$ denotes the position of its $j$-th substituent. 
The corresponding \emph{chirality product} is then defined as
\begin{equation}\label{equ:chirality_product}
    P_\mathrm{C}(i) := 
    \big( (\vx_{r_{i,1}} - \vx_i) \times (\vx_{r_{i,2}} - \vx_i) \big)
    \,\cdot\,
    (\vx_{r_{i,4}} - \vx_{r_{i,3}})
    = \det\big(\mM_\mathrm{C}(i)\big),
\end{equation}
where $\times$ denotes the vector cross product, $\cdot$ denotes the dot product, and $\det(\cdot)$ is the determinant of a given matrix. 
Appendix~\ref{app:atom} provides details on how $\mathcal{N}(i)$ is determined.

\begin{proposition}\label{prop:invariance}
Given a chiral atom $i$, the chirality product $P_\mathrm{C}(i)$ is invariant under rigid-body translation and rotation, and changes sign under reflection:
\begin{equation}
\det\big(\mM_\mathrm{C}(i)\big) 
= \det\big(R_1 \mM_\mathrm{C}(i)\big), 
\quad 
\det\big(R_2 \mM_\mathrm{C}(i)\big) 
= -\det\big(\mM_\mathrm{C}(i)\big),
\end{equation}
where $R_1 \in \mathrm{SE}(3)$ denotes any rigid-body motion with rotation $r \in \mathrm{SO}(3)$ and translation $t \in \mathbb{R}^3$, and $R_2 \in O^{-}(3)$ denotes any reflection (orthogonal transformation with determinant $-1$).
\end{proposition}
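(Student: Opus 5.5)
The plan is to make precise how a rigid motion or reflection acts on $\mM_\mathrm{C}(i)$, and then apply multiplicativity of the determinant. The notation $R_1\mM_\mathrm{C}(i)$ in the statement is shorthand. The transformation acts on the atomic coordinates, $\vx \mapsto Q\vx + t$, with $Q$ orthogonal. Here $Q = r \in \mathrm{SO}(3)$ with $t \in \mathbb{R}^3$ for $R_1$, and $Q \in O^{-}(3)$ with $t = 0$ (or any $t$) for $R_2$. We then recompute the chirality matrix of equation~\ref{equ:chirality_matrix} from the transformed coordinates.

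\textbf{Step 1: translations drop out.} Every row of $\mM_\mathrm{C}(i)$ is a difference of two positions: $\vx_{r_{i,1}}-\vx_i$, $\vx_{r_{i,2}}-\vx_i$, and $\vx_{r_{i,4}}-\vx_{r_{i,3}}$. Hence
\[
(Q\vx_a + t) - (Q\vx_b + t) = Q(\vx_a - \vx_b),
\]
and the translation $t$ disappears. Each row $v^\top$ is replaced by $(Qv)^\top = v^\top Q^\top$, so the transformed matrix is exactly $\mM_\mathrm{C}(i)\,Q^\top$. This identity is what the statement's notation $R\,\mM_\mathrm{C}(i)$ should be read as. Whether $Q$ multiplies on the left or right is immaterial for the determinant.

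\textbf{Step 2: multiplicativity of the determinant.} Using $\det(AB)=\det A\det B$ and $\det Q^\top = \det Q$, we get
\[
\det\big(\mM_\mathrm{C}(i) Q^\top\big) = \det Q \cdot \det \mM_\mathrm{C}(i).
\]
For $Q \in \mathrm{SO}(3)$ we have $\det Q = 1$, which gives the first identity. For $Q \in O^{-}(3)$ we have $\det Q = -1$, which gives the second. By equation~\ref{equ:chirality_product}, $P_\mathrm{C}(i)=\det(\mM_\mathrm{C}(i))$, so this settles the claim. As a cross-check, one can also verify the scalar triple product version directly using the identity $(Qa)\times(Qb) = \det(Q)\,Q(a\times b)$ together with the invariance of the dot product under orthogonal maps.

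\textbf{Main obstacle.} The only delicate point is notational rather than mathematical. Read literally, a translation cannot act by left multiplication on a $3\times3$ matrix, so the proof must state that $R_1$ acts on coordinates and that translation invariance follows from the difference structure of the rows. One should also note that the ordering of substituents $\mathcal{N}(i)$ is fixed combinatorially (Appendix~\ref{app:atom}) and is not changed by the geometric transformation. Otherwise a reordering could introduce an extra sign.
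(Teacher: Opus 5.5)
Your proof is correct and uses the argument the paper relies on. The paper gives no standalone proof of this proposition; it simply invokes $\det(r\mM)=\det(r)\det(\mM)$ with $\det r=\pm 1$, as at the end of the proof of Lemma~\ref{lemma:generalized_chirality}, and your added observations (translations cancel because every row of $\mM_\mathrm{C}(i)$ is a difference of positions, and the transformed matrix is really $\mM_\mathrm{C}(i)Q^{\top}$ rather than a left product) only make the paper's shorthand $R\,\mM_\mathrm{C}(i)$ precise without changing the idea.
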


\begin{lemma}\label{lemma:enantiomers}
Let $(\vz_1,\vz_2)$ be a pair of enantiomers differing only in the stereochemical configuration of atom $i$. Then
\begin{equation}
\text{configuration}(i) =
\begin{cases}
\text{R}, & \det(\mM_\mathrm{C}(i)) > 0, \\
\text{S}, & \det(\mM_\mathrm{C}(i)) < 0. \\
\end{cases}
\end{equation}
\end{lemma}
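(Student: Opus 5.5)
The plan is to reduce the lemma to two facts: the determinant changes sign under reflection, which is Proposition~\ref{prop:invariance}, and the CIP definition of R/S is equivalent to a sign condition on a triple product.

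The substituent order $\mathcal{N}(i)=(r_{i,1},\dots,r_{i,4})$ is fixed by the convention in Appendix~\ref{app:atom}. I will assume it is a fixed relabeling of the CIP priorities $a>b>c>d$. For concreteness, assume $r_{i,1},r_{i,2}$ are two of the higher-priority groups and $r_{i,3},r_{i,4}$ are arranged so that $\vx_{r_{i,4}}-\vx_{r_{i,3}}$ has a definite orientation relative to the lowest-priority direction. The exact signs are fixed by that convention.

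\textbf{Step 1: express the CIP rule as a sign.} Orient the molecule so the lowest-priority substituent $d$ points away from the viewer. The configuration is R exactly when $a\to b\to c$ runs clockwise. Equivalently, the R case holds exactly when
\[
\big((\vx_a-\vx_i)\times(\vx_b-\vx_i)\big)\cdot(\vx_c-\vx_i)
\]
has a fixed sign. The correct sign is fixed by checking one reference tetrahedron; with the right-hand rule it is negative for R.

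\textbf{Step 2: show $\det(\mM_\mathrm{C}(i))$ has the same sign pattern.} Use multilinearity of the determinant to expand $\vx_{r_{i,4}}-\vx_{r_{i,3}}$ into the two triple products involving $\vx_{r_{i,4}}-\vx_i$ and $\vx_{r_{i,3}}-\vx_i$. For a non-degenerate tetrahedral center, every triple product of three distinct substituent vectors takes a sign determined by the cyclic orientation of those vectors. Since the four substituent directions lie roughly on the vertices of a tetrahedron, the two terms have equal sign, not opposite sign, once the convention places $r_{i,3}$ and $r_{i,4}$ appropriately. So they reinforce each other, and $\det(\mM_\mathrm{C}(i))$ is nonzero with sign equal to a fixed multiple ($\pm1$) of the CIP triple product. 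The sign conventions in the definition are chosen so this multiple makes R correspond to a positive determinant.

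\textbf{Step 3: treat the enantiomer pair.} $\vz_2$ is obtained from $\vz_1$ by a reflection $R_2\in O^{-}(3)$ acting locally at $i$ (up to a rigid motion $R_1$). Proposition~\ref{prop:invariance} then gives $\det(\mM_\mathrm{C}(i))$ opposite signs in $\vz_1$ and $\vz_2$. Since the configurations are also opposite, the correspondence fixed in Step 2 for one member is consistent for the other.

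\textbf{Main obstacle.} The hard part is Step 2: checking that the two expanded triple products cannot cancel. This needs a geometric non-degeneracy assumption, namely near-tetrahedral geometry with each substituent outside the plane of the other three. I would first verify it on the ideal tetrahedron, then argue by continuity that the sign persists for any embedding in which no four of the five atoms become coplanar. The axial case is handled the same way, with $\vx_i$ replaced by the axis atom and the four nearest group atoms.
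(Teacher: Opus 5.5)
Your route differs from the paper's. The paper's proof is essentially only your Step~3. It asserts $\det\mM_\mathrm{C}(i)\neq 0$ because the substituent vectors are non-coplanar, uses Proposition~\ref{prop:invariance} to show that the two enantiomers receive opposite signs, and then reads off R/S ``by the sign convention in the lemma.'' So it shows only that the sign separates the two enantiomers, never that a positive determinant corresponds to the CIP descriptor R. Your Steps~1--2 aim at exactly that identification, which is the actual content of the lemma. Once it is done it holds for each molecule on its own, so Step~3 becomes unnecessary. Your ``main obstacle'' is also real, and the paper passes over it: non-coplanarity of the substituent vectors does not by itself give $\det\mM_\mathrm{C}(i)\neq0$.

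As written, however, Step~2 has a gap. Put $v_j:=\vx_{r_{i,j}}-\vx_i$ and $[u,v,w]:=(u\times v)\cdot w$, so that $\det\mM_\mathrm{C}(i)=[v_1,v_2,v_4]-[v_1,v_2,v_3]$.

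First, continuity preserves a sign only along paths on which this quantity never vanishes, and that non-cancellation is what you set out to prove. Moreover, its zero set (where $\vx_{r_{i,3}}$ and $\vx_{r_{i,4}}$ have equal signed distance from the plane through $\vx_i,\vx_{r_{i,1}},\vx_{r_{i,2}}$) is not the coplanarity locus. For example, $\vx_i=0$, $v_1=(1,0,0)$, $v_2=(0,1,0)$, $v_3=(1,2,1)$, $v_4=(3,5,1)$ gives determinant $0$ with no four of the five points coplanar.

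Second, the multiple $\pm1$ is not set by convention; it is forced and has to be computed.

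Both points can be settled directly. Tetrahedral geometry means $\vx_i$ lies strictly inside the tetrahedron of its four neighbours, i.e.\ $\sum_j\lambda_jv_j=0$ with all $\lambda_j>0$; this is also the property your deformation argument would actually preserve. Dotting with $v_1\times v_2$ gives $\lambda_3[v_1,v_2,v_3]+\lambda_4[v_1,v_2,v_4]=0$, hence
\[
\det\mM_\mathrm{C}(i)=-\Bigl(1+\tfrac{\lambda_3}{\lambda_4}\Bigr)[v_1,v_2,v_3]\neq 0 .
\]
For the priority order $(a,b,c,d)$ this has the sign of $-[v_a,v_b,v_c]$. For the reversed order $(d,c,b,a)$, which the ``ascending'' sort in Appendix~\ref{app:atom} may produce, the same substitution gives $[v_d,v_c,v_b]=(\lambda_a/\lambda_d)[v_a,v_b,v_c]$, so nothing changes. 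Combined with your Step~1 ($[v_a,v_b,v_c]<0$ exactly for R), this gives R $\Leftrightarrow \det\mM_\mathrm{C}(i)>0$.

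Two smaller points. ``Placing $r_{i,3},r_{i,4}$ appropriately'' plays no role: swapping them negates the whole determinant, while the reinforcement of the two terms is purely geometric. The claim that the axial case is ``handled the same way'' is unsupported, because Ra/Sa come from a different CIP procedure along the axis. Since the lemma as stated concerns only R/S, that sentence should simply be dropped.
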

Proof of Lemma~\ref{lemma:enantiomers} is provided in Appendix~\ref{app:proof_1}.

\section{Method}
\subsection{ChiDeK Framework}
The proposed \model framework, illustrated in Figure~\ref{fig:model}(a), is designed to capture stereochemical information by explicitly modeling chiral atoms through chiral determinant kernels. Given a molecule $\vz=(\mX,\mH)$, where $\mX = (\vx_1, \dots, \vx_M) \in \mathbb{R}^{3M}$ specifies the 3D coordinates of the $M$ atoms, and $\mH = (\vh_1, \dots, \vh_M) \in \mathbb{R}^{dM}$ encodes atom-level features, we construct a molecular graph $\mathcal{G}=(\mathcal{V},\mathcal{E})$, where the node set $\mathcal{V}$ is partitioned into chiral atoms, chiral-related atoms, and non-chiral atoms, and the edge set $\mathcal{E}$ incorporates geometric distances between atoms.

The forward pass of \model proceeds in three stages:  
(1) a \emph{chiral encoder} computes chiral-sensitive embeddings for chiral atoms via chiral determinant kernels, while three separate multi-layer projectors compute chiral-invariant embeddings for chiral, chiral-related, and non-chiral atoms;  
(2) the multi-layer \emph{chiral cross-attention} mechanism refines embeddings of chiral atoms (queries) by attending to heterogeneous neighbors (keys/values), guided by distance-aware Gaussian Kernel with Pair Type (GKPT) biases;  
(3) the resulting chiral-aware representations are aggregated and passed through task-specific heads (e.g., chirality classification, spectrum regression).

\begin{figure*}[t!]
  \centering
  \includegraphics[width=0.98\linewidth]{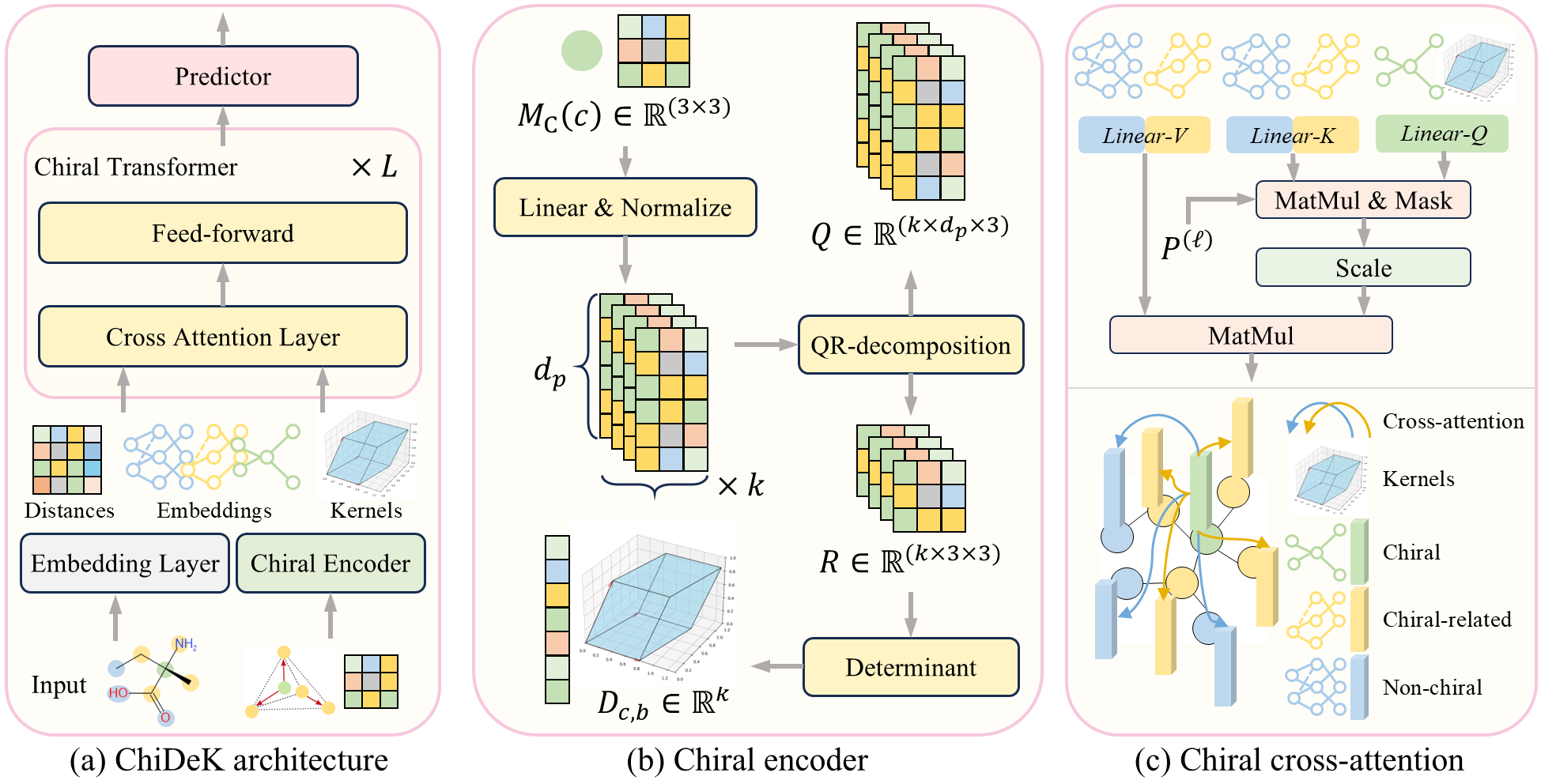}
  \caption{Overview of the \model architecture (a). It consists of a chiral encoder (b), a chiral transformer incorporating $L$ cross attention layers (c), and a predictor for predicting chiral properties.}
  \label{fig:model}
\end{figure*}
\subsection{Chiral Encoder}
A key requirement for learning molecular chirality is to design features that are sensitive to reflections but invariant to global translations and rotations. The classical determinant of a $3\times 3$ chirality matrix $\mM_\mathrm{C}(i)$ provides exactly such a property, as it encodes the signed volume spanned by three bond vectors around a chiral atom. To generalize this construction into a high-dimensional latent space, we introduce the chiral encoder based on chiral determinant kernels.

\textbf{Chiral Determinant Kernel.}
To explicitly represent chiral atoms with stereogenic information, we introduce the chiral determinant kernel, a determinant-based kernel that encodes the oriented volume spanned by triplets of projected atomic vectors with QR decomposition.  

Let $\mM_\mathrm{C} \in \mathbb{R}^{B \times 3 \times 3}$ denote the batch of chirality matrices and $\mW \in \mathbb{R}^{k \times d_p \times 3}$ the learnable weights, where $k$ is the number of kernels and $d_p$ is the projection dimension. 
After broadcasting to 
$\mM_\mathrm{C}' \in \mathbb{R}^{B \times k \times 3 \times 3}$ and $\mW' \in \mathbb{R}^{B \times k \times d_p \times 3}$, we compute
\begin{equation}\label{equ:wb_multiplication}
    \mO_{b,k} 
    = \mW_{k}' \,\mM_{\mathrm{C},b}',
    \quad 
    \mO \in \mathbb{R}^{B \times k \times d_p \times 3},
\end{equation}
where $\mM_{\mathrm{C},b}'\in\mathbb{R}^{3\times 3}$ is the $b$-th chirality matrix, 
$\mW_{k}'\in\mathbb{R}^{d_p \times 3}$ is the $k$-th kernel, and multiplication is standard matrix multiplication along the last two dimensions. Then, we reshape the output $\mO$ from $\mathbb{R}^{B \times k \times d_p \times 3}$ to $\mO' \in \mathbb{R}^{(BK) \times d_p \times 3}$. 
Applying layer normalization along the $d_p$ dimension yields
\begin{align}
    \tilde{\mO} = \mathrm{LayerNorm}\left(\mO'\right) \;\in\; \mathbb{R}^{(BK) \times d_p \times 3}.
\end{align}

Next, we perform a QR decomposition for each slice:
\begin{equation}\label{equ:qr_decomposition}
    \tilde{\mO}_{(bk)} = \mQ_{(bk)} \mR_{(bk)}, \quad \mQ_{(bk)}^{\top}\mQ_{(bk)}=\mI_3, 
    \quad 
    \mQ_{(bk)} \in \mathbb{R}^{d_p \times 3},\;
    \mR_{(bk)} \in \mathbb{R}^{3 \times 3},
\end{equation}
for all $b \in \{1,\dots,B\}$ and $k \in \{1,\dots,k\}$. We then compute the determinant of each factor:
\begin{equation}\label{equ:determinant}
    \mD_{c,b,k} = \det \big( \mR_{(bk)} \big), \quad \mD_c \in \mathbb{R}^{B \times k}.
\end{equation}
Thus, each chiral atom is embedded into a $k$-dimensional representation that generalizes the determinant, provides differentiability \citep{Roberts2020QRAL}, and explicitly encodes stereochemistry. Then the initial hidden representations of chiral, chiral-related, and non-chiral atoms are
\begin{equation}
    \mH_c = \mD_c + f_c\left(\mH_{\mathcal{I}_c}\right), \quad
    \mH_r = f_r\left(\mH_{\mathcal{I}_r}\right), \quad 
    \mH_n = f_n\left(\mH_{\mathcal{I}_n}\right),
\end{equation}
where $f_c, f_r, f_n$ are embedding layers for atomic features. We finally prepend a learnable global chiral token before chiral atoms to enhance chiral representation.

\subsection{Chiral Transformer}
\textbf{Gaussian kernel with pair type (GKPT).}
Distances between atoms provide crucial stereochemical cues. To encode them, we adopt GKPT \citep{kernel, unimol}. Formally, for input distance $x$ and pair type $e$, the GKPT is defined as
\begin{equation}
\begin{aligned}
    \text{GKPT}\left( (x,e), \boldsymbol{\mu}, \boldsymbol{\sigma}\right) 
    &= \mathcal{G}\left(\mathbf{E}_1(e) \cdot x + \mathbf{E}_2(e), \boldsymbol{\mu}, \boldsymbol{\sigma} \right), \\
\end{aligned}
\end{equation}
where 
$\mathcal{G}\left(x^\prime,\boldsymbol{\mu},\boldsymbol{\sigma}\right) = \frac{1}{\sqrt{2\pi} \boldsymbol{\sigma}} \exp\left(-\tfrac{1}{2} \left(\tfrac{x^\prime - \boldsymbol{\mu}}{\boldsymbol{\sigma}}\right)^2\right)$, $\mathbf{E}_1, \mathbf{E}_2 \in \mathbb{R}^{N_{\text{e}} \times k}$ 
are learnable embeddings, $N_{\text{e}}$ is the number of pair types, and $\boldsymbol{\mu},\boldsymbol{\sigma}\in\mathbb{R}^k$ parameterize the Gaussian kernels. Here, we distinguish between two types of edges: chiral atoms with chiral-related atoms and with non-chiral atoms. The GKPT module is applied to geometric distance $x_{ij}$ with edge type $e_{ij}$, between atom $i\in \mathcal{I}_c$ and $j\in \mathcal{I}_r\bigcup \mathcal{I}_n$, producing the bias term
\begin{align}
    p_{ij}^{(0)} = \text{GKPT}\left((x_{ij}, e_{ij}), \boldsymbol{\mu}, \boldsymbol{\sigma}\right) \vw_p,
\end{align}
with learnable projection $\vw_p$. The resulting pairwise bias matrix is
\begin{equation}
    \mP^{(0)} 
    = \bigl[p_{ij}^{(0)}\bigr]_{\,1 \le i \le |\mathcal{I}_c|,\; 1 \le j \le |\mathcal{I}_r|+|\mathcal{I}_n|}
    \in \mathbb{R}^{\,|\mathcal{I}_c| \times (|\mathcal{I}_r|+|\mathcal{I}_n|)}.
\end{equation}
This pairwise representation encodes chemically relevant interactions, informing the subsequent attention computation and enabling the model to learn complex spatial dependencies.

\textbf{Chiral Cross-attention.}
We introduce a cross-attention mechanism enhanced by pairwise representations, where chiral atoms serve as queries, while chiral-related and non-chiral atoms provide keys and values. 
Let $\mH_c$, $\mH_r$, and $\mH_n$ denote the hidden representations of chiral atoms $\mathcal{I}_c$, chiral-related atoms $\mathcal{I}_r$, and non-chiral atoms $\mathcal{I}_n$, respectively. 
At each layer $\ell$ (totally $L$ layers), the linear projections for keys and values are separated according to atom types and defined as
\begin{equation}
\begin{aligned}
    \mK_r^{(\ell)} = \mH_r W_{K_r}^{(\ell)\top},\quad \mV_r^{(\ell)} = \mH_r W_{V_r}^{(\ell)\top}, \quad
    \mK_n^{(\ell)} = \mH_n W_{K_n}^{(\ell)\top},\quad \mV_n^{(\ell)} = \mH_n W_{V_n}^{(\ell)\top},
\end{aligned}
\end{equation}
where $W_{K_r}^{(\ell)}$, $W_{V_r}^{(\ell)}$, $W_{K_n}^{(\ell)}$, and $W_{V_n}^{(\ell)}\in \mathbb{R}^{k\times k}$ are learnable projection matrices.
Let $\mH_c^{(\ell)}$ denotes the output at layer $\ell$, the resulting queries, keys, and values are
\begin{equation}
    \mQ^{(\ell)} = \mH_c^{(\ell-1)} W_Q^{(\ell)\top}, \quad
    \mK^{(\ell)} = \bigl[\mK_r^{(\ell)},\; \mK_n^{(\ell)}\bigr], \quad 
    \mV^{(\ell)} = \bigl[\mV_r^{(\ell)},\; \mV_n^{(\ell)}\bigr],
\end{equation}
where $W_Q^{(\ell)} \in \mathbb{R}^{k\times k}$ is the learnable query projection, $\mH_c^{(0)}=\mH_c$, and $\bigl[\cdot,\cdot\bigr]$ denotes concatenation. 
At each layer $\ell$, we maintain a learnable pairwise bias $p_{ij}^{(\ell)}$ between atom $i$ (chiral) and atom $j$ (chiral-related or non-chiral), updated via the query-key interaction. See Appendix~\ref{app:cross} for details.

\textbf{Predictor.} 
We utilize a lightweight predictor to predict chirality properties. It consists of two linear layers with GELU activation \cite{GELU}. This simple yet effective architecture maps the final global representation to the target prediction space.

\subsection{Theoretical Analysis for Chiral Encoder}
The chirality product obtained from the chirality matrix \citep{Shi2025ChiralFinder} exhibits inherent sensitivity to stereochemical variation. We demonstrate that a chiral determinant kernel maintains this desirable characteristic.

\begin{lemma}\label{lemma:generalized_chirality}
Let $\mM_\mathrm{C}(i) \in \mathbb{R}^{3 \times 3}$ be the chirality matrix of atom $i$, and let $\mW \in \mathbb{R}^{d_p \times 3}$ be a full column-rank projection. Define QR decomposition of $\mO = \mW \mM_\mathrm{C}(i) \in \mathbb{R}^{d_p \times 3}$ as
\begin{equation}
    \mO = \mQ \mR, \quad \mQ^{\top}\mQ=\mI_3, \quad \mQ \in \mathbb{R}^{d_p \times 3}, \quad
    \mR \in \mathbb{R}^{3 \times 3}.
\end{equation}
Then the generalized chirality product $P_\mW(i) = \det(\mR)$ satisfies
\begin{equation}
    P_\mW(i) = \alpha(\mW) \cdot P_\mathrm{C}(i),
\end{equation}
where $P_\mathrm{C}(i)$ is the chirality product, and $\alpha(\mW)>0$ is a scaling factor depending only on $\mW$ but not on the atom coordinates. 
$P_\mW(i)$ satisfies Proposition~\ref{prop:invariance} and Lemma~\ref{lemma:enantiomers}.
\end{lemma}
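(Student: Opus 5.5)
The plan is to reduce everything to the identity $\det(\mR)=\det(\mR_\mW)\det(\mM_\mathrm{C}(i))$, where $\mR_\mW$ is the triangular factor of $\mW$ itself. Take the thin QR decomposition of the projection, $\mW=\mQ_\mW\mR_\mW$, with $\mQ_\mW\in\mathbb{R}^{d_p\times 3}$ having orthonormal columns and $\mR_\mW\in\mathbb{R}^{3\times 3}$ upper triangular with positive diagonal. Since $\mW$ has full column rank, this factorization is unique and $\det(\mR_\mW)=\prod_j(\mR_\mW)_{jj}>0$. Then
\begin{equation}
\mO=\mW\mM_\mathrm{C}(i)=\mQ_\mW\,\bigl(\mR_\mW\mM_\mathrm{C}(i)\bigr).
\end{equation}
This exhibits a factorization $\mO=\mQ\mR$ with $\mQ=\mQ_\mW$ satisfying $\mQ^\top\mQ=\mI_3$ and $\mR=\mR_\mW\mM_\mathrm{C}(i)$. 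Multiplicativity of the determinant gives $P_\mW(i)=\det(\mR_\mW)\det(\mM_\mathrm{C}(i))$. So we set $\alpha(\mW):=\det(\mR_\mW)=\sqrt{\det(\mW^\top\mW)}>0$, which depends only on $\mW$.

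The main obstacle is the non-uniqueness of $\mR$: this $\mR$ is not upper triangular in general. Any other factorization $\mO=\mQ'\mR'$ with orthonormal $\mQ'$ is related by $\mQ'=\mQ\mathbf{S}$ and $\mR'=\mathbf{S}^\top\mR$ for some $\mathbf{S}\in O(3)$, at least when $\mM_\mathrm{C}(i)$ is invertible. Hence $|\det\mR|$ is canonical, but its sign depends on $\det\mathbf{S}$. Indeed, under the textbook convention of a positive diagonal for $\mR$, $\det\mR$ would always be positive, and the sign information would be lost.

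I would make the orientation convention explicit first. The orthonormal frame $\mQ$ must be positively oriented relative to $\mQ_\mW$, i.e.\ $\det(\mQ_\mW^\top\mQ)=+1$. This is well posed because $\mathrm{span}(\mO)=\mathrm{span}(\mW)$ whenever $\det\mM_\mathrm{C}(i)\neq 0$. Equivalently, the sign of $\det\mR$ is fixed by the $\mW$-dependent frame and not by the data. Under this convention every admissible $\mathbf{S}$ lies in $SO(3)$, so $\det\mR$ is well defined and equals the value computed above. In the degenerate case $\det\mM_\mathrm{C}(i)=0$, both sides vanish by the Gram identity $\det(\mR)^2=\det(\mO^\top\mO)=\det(\mW^\top\mW)\det(\mM_\mathrm{C}(i))^2$. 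I would also use this Gram identity as an independent check on $|\alpha(\mW)|$.

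With $P_\mW(i)=\alpha(\mW)P_\mathrm{C}(i)$ and $\alpha(\mW)>0$ established, the remaining claims transfer directly. For Proposition~\ref{prop:invariance}, I would note that a rigid motion $\vx\mapsto r\vx+t$ leaves all difference vectors in \eqref{equ:chirality_matrix} free of $t$. It therefore sends $\mM_\mathrm{C}(i)\mapsto\mM_\mathrm{C}(i)r^\top$, so $\det$ is preserved because $\det r=1$. A reflection $r$ with $\det r=-1$ flips the sign. Because $\alpha(\mW)$ does not depend on the coordinates, $P_\mW(i)$ inherits exactly the same behaviour. For Lemma~\ref{lemma:enantiomers}, the positive factor $\alpha(\mW)$ preserves signs, so $\operatorname{sign}P_\mW(i)=\operatorname{sign}P_\mathrm{C}(i)$, and the R/S assignment carries over verbatim.
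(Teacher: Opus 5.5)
Your proof is correct, and it arrives at the same constant $\alpha(\mW)=\sqrt{\det(\mW^\top\mW)}$ as the paper, but it handles the sign, the only delicate point, quite differently. The paper uses the Gram identity $\mR^\top\mR=\mM^\top\mG\mM$ to get $|\det\mR|=\sqrt{\det\mG}\,|\det\mM|$. It then tries to recover the sign from $\det\mO=\det\mQ\,\det\mR$ and $\det(\mW\mM)=\det\mW\,\det\mM$. Those determinants exist only when $d_p=3$, and the leftover factor $\det\mQ=\pm1$ is ``absorbed'' into a positive constant with no argument that it is independent of $\mM$. You instead factor $\mW=\mQ_\mW\mR_\mW$ first, which gives the explicit factorization $\mO=\mQ_\mW(\mR_\mW\mM)$ for every $d_p\ge 3$. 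You then isolate the real ambiguity ($\det\mR$ is fixed only up to $\det\mathbf{S}$, $\mathbf{S}\in O(3)$) and remove it with an orientation convention that depends only on $\mW$. The Gram identity is left as a magnitude check and as the tool for the case $\det\mM=0$.

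What your route buys is a sign argument that actually holds. What it costs is that the lemma is proved for factorizations obeying your convention, not for whatever $\mR$ a library returns, so you should treat that convention as a genuine hypothesis. An upper-triangular $\mR$ satisfying it always exists: take the positive-diagonal factorization and, if $\det\mM<0$, negate the $(3,3)$ entry of $\mR$ and the last column of $\mQ$. Householder QR as in PyTorch does not pick it, however. Its diagonal signs follow the signs of leading entries of partially reduced columns, and for $d_p>3$ two inputs with the same $\mW$ and the same $\det\mM$ can return $\det\mR$ of opposite signs. This already happens with two columns in $\mathbb{R}^3$: the pairs $(1,-0.101,1),(10,-1,0)$ and $(-0.01,0,1),(10,-1,0)$ have the same oriented span, yet the LAPACK sign rule gives $\det\mR<0$ for the first and $\det\mR>0$ for the second. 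Your explicit convention is therefore exactly the assumption the paper's absorption step leaves unstated.
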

The proof of Lemma~\ref{lemma:generalized_chirality} is provided in Appendix~\ref{app:proof_2}.

\textbf{Discussion on rank deficiency.}  
If $\mW$ is not full column-rank, then $\mO$ becomes rank-deficient and the determinant $\det(\mR)$ degenerates to $0$, resulting in the loss of chirality information. While the magnitude of $\det(\mR)$ depends on $\alpha(\mW)$, the sign remains a reliable indicator of chiral configurations as long as $\mW$ has full column rank. Hence, ensuring rank sufficiency of $\mW$ is critical for stability and discriminative power. 
To guarantee full column rank of $\mW$, we adopt two strategies in practice:

\textbf{Weight regularization.} We introduce a penalty term on the weight as part of the loss function:
\begin{align}
    \mathcal{L}_\text{reg} = ||\mW^\top \mW - \mI_3||^2,
\end{align}
encouraging $\mW$ to preserve rank during training.

\textbf{Auxiliary QR on weights.} Alternatively, we apply QR decomposition directly on $\mW$ before projection, ensuring that projected neighbor vectors remain linearly independent.

\section{Experiments}
\subsection{Main Experiments}
\textbf{Datasets.}
We evaluate our method on both centrally and axially chiral molecules using multiple benchmarks, as shown in Appendix~Table~\ref{tab:datasets}. R/S classification and enantiomer ranking datasets for central chirality are from ChIRo \citep{adams2022learning}. ECD prediction for central chirality is from CMCDS \citep{spectra}. ECD and OR prediction for axial chirality are constructed by ourselves, named \data (axial chiral molecular properties). See Appendix~\ref{app:dataset} for details about dataset processing, construction, and splits.

\textbf{Baselines.} We benchmark \model against both an E(3)-invariant baseline, DimeNet++ \citep{gasteiger_dimenetpp_2020}, and several SE(3)-invariant baselines, including SphereNet \citep{liu2022spherical}, ChIRo \citep{adams2022learning}, ECDFormer \citep{spectra}, and ChiGNN \citep{Yan2025}. We also consider the 2D chiral GNN Tetra-DMPNN with permutation (p) and concatenation (c) variants \citep{Pattanaik2020MessagePN}, and the stereostructure descriptor-based SPMS \cite{xu2021molecular}.

\textbf{Implementation Details.} All experiments are implemented in PyTorch \citep{pytorch} and conducted on NVIDIA RTX 3090 GPUs. See Appendix~\ref{app:details} for more training details.
\subsubsection{Experimental Results for Central Chirality}
\begin{table}[htbp!]
\caption{R/S classification, enantiomer ranking, and ECD prediction results for central chirality. \textbf{Best} and \underline{second-best} are marked.}
\label{tab:central_all}
\begin{center}
\begin{tabular}{lrrrrr}
    \toprule
    \multirow{2}{*}{\textbf{Method}} & \textbf{R/S (\%)} & \textbf{Ranking (\%)} & \textbf{Position} & \textbf{Number} & \textbf{Symbol (\%)} \\
    \cmidrule{2-6}
    & \textbf{Acc} $\uparrow$ & \textbf{Acc} $\uparrow$ & \textbf{RMSE} $\downarrow$ & \textbf{RMSE} $\downarrow$ & \textbf{Acc} $\uparrow$ \\
    \midrule
    DimeNet++ & $65.7 \pm 2.9$ & $58.4 \pm 0.2$ & \underline{$2.12 \pm 0.19$} & $1.04 \pm 0.15$ & $50.8 \pm 0.1$ \\
    Tetra-DMPNN (c) & \underline{$99.7 \pm 0.1$} & $70.1 \pm 0.5$ & $2.38 \pm 0.12$ & $1.25 \pm 0.09$ & $50.0 \pm 0.1$ \\
    Tetra-DMPNN (p) & \underline{$99.7 \pm 0.1$} & $67.6 \pm 0.6$ & $2.36 \pm 0.16$ & $1.28 \pm 0.08$ & $50.0 \pm 0.1$ \\
    SphereNet & $98.2 \pm 0.2$ & $68.6 \pm 0.3$ & $2.36 \pm 0.15$ & \underline{$1.02 \pm 0.11$} & \underline{$51.9 \pm 0.3$} \\
    ChIRo & $98.5 \pm 0.2$ & \underline{$72.0 \pm 0.5$} & $2.67 \pm 0.13$ & $1.22 \pm 0.13$ & $51.0 \pm 0.4$ \\
    ECDFormer & $92.3 \pm 1.2$ & $58.6 \pm 0.3$ & \textbf{2.02} $\pm$ \textbf{0.10} & \textbf{1.01} $\pm$ \textbf{0.09} & $50.3 \pm 0.3$ \\
    ChiGNN & $84.5 \pm 0.9$ & $59.6 \pm 0.4$ & $2.39 \pm 0.18$ & $1.24 \pm 0.10$ & $49.9 \pm 0.5$ \\
    SPMS & $81.4 \pm 0.7$ & $60.4 \pm 0.4$ & $2.58 \pm 0.17$ & $1.22 \pm 0.12$ & $50.9 \pm 0.3$ \\
    \midrule
    \model (Ours) & \textbf{99.8} $\pm$ \textbf{0.1} & \textbf{72.8} $\pm$ \textbf{0.2} & $2.20 \pm 0.14$ & $1.18 \pm 0.09$ & \textbf{53.3} $\pm$ \textbf{0.6} \\
    \bottomrule
\end{tabular}
\end{center}
\end{table}

\textbf{R/S Classification.}
Table~\ref{tab:central_all} reports the results. \model achieves the best performance close to 100\%. This clearly demonstrates the effectiveness of our chirality-aware design. In contrast, the E(3)-invariant DimeNet++ completely fails on this task, which is as expected. We present a visualization of cross attention weights in Appendix~\ref{app:attn}.
Note that, as emphasized in prior work \citep{adams2022learning}, this R/S classification task is necessary but not sufficient for meaningful chiral learning. Nonetheless, strong performance on this benchmark is an essential prerequisite for chirality-sensitive applications.

\textbf{Enantiomer Ranking.}
Table~\ref{tab:central_all} demonstrates that \model achieves optimal performance, surpassing the second-best method ChIRo by 0.8\%. The modest performance gap between \model and ChIRo can be attributed to the limited structural diversity of the dataset, as all molecules contain only a single chiral center. 

\textbf{ECD Spectrum Prediction.}
Since the number and positions of peaks remain invariant across enantiomers while the symbols of peak intensities are inverted, accurately predicting peak heights represents the most critical aspect of ECD spectrum modeling. 
A model that successfully recovers peak counts and positions but fails to distinguish peak height symbols is ineffective and impractical. Consequently, we require a model that jointly predicts all three components, providing comprehensive ECD spectrum prediction. 
As shown in Table~\ref{tab:central_all}, \model achieves optimal performance in predicting peak height symbols while maintaining comparable accuracy for peak positions and counts. 

However, the sign accuracy remains only slightly above 50\% across all models. We hypothesize that this is largely due to \textit{dataset inconsistencies}: the original CMCDS dataset provides RDKit-generated \citep{rdkit} conformers, whereas the ECD labels are computed from optimized geometries. This mismatch causes poor performance. This underscores the inherent difficulty of reliable ECD spectrum modeling and suggests that consistent datasets are essential for more robust evaluation.

\subsubsection{Experimental Results for Axial Chirality}
\begin{table}[htbp!]
\caption{OR and ECD prediction results for axial chirality. \textbf{Best} and \underline{second-best} are marked.}
\label{tab:axial-ecd}
\begin{center}
\begin{tabular}{lrrrr}
    \toprule
    \multirow{2}{*}{\textbf{Method}} & \textbf{Rotation (\%)} & \textbf{Position} & \textbf{Number} & \textbf{Symbol (\%)} \\
    \cmidrule{2-5}
    & \textbf{Acc} $\uparrow$ & \textbf{RMSE} $\downarrow$ & \textbf{RMSE} $\downarrow$ & \textbf{Acc} $\uparrow$ \\
    \midrule
    DimeNet++ & $50.0 \pm 0.0$ & $3.62 \pm 0.17$ & $1.13 \pm 0.08$ & $50.0 \pm 0.1$ \\
    Tetra-DMPNN (c) & $50.0 \pm 0.1$ & \underline{3.15 $\pm$ 0.12} & $1.20 \pm 0.15$ & $52.8 \pm 0.2$ \\
    Tetra-DMPNN (p) & $50.0 \pm 0.1$ & $3.16 \pm 0.10$ & \textbf{1.05} $\pm$ \textbf{0.12} & $52.8 \pm 0.2$ \\
    SphereNet & $52.5 \pm 0.2$ & $3.36 \pm 0.22$ & $1.08 \pm 0.09$ & $52.4 \pm 0.4$ \\
    ChIRo & $50.0 \pm 0.1$ & $3.78 \pm 0.18$ & $1.11 \pm 0.11$ & $51.1 \pm 0.3$ \\
    ECDFormer & $53.5 \pm 0.3$ & $3.89 \pm 0.25$ & $1.16 \pm 0.14$ & $51.5 \pm 0.5$ \\
    ChiGNN & $50.2 \pm 0.1$ & \textbf{2.89} $\pm$ \textbf{0.12} & \underline{$1.06 \pm 0.13$} & $50.8 \pm 0.8$ \\
    SPMS & \underline{$65.0 \pm 0.6$} & $3.69 \pm 0.21$ & $1.16 \pm 0.15$ & \underline{$60.4 \pm 0.3$} \\
    \midrule
    \model (Ours) & \textbf{69.2} $\pm$ \textbf{0.5} & $3.24 \pm 0.14$ & \textbf{1.05} $\pm$ \textbf{0.12} & \textbf{71.2} $\pm$ \textbf{0.6}\\
    \bottomrule
\end{tabular}
\end{center}
\end{table}
\textbf{OR Prediction.}
Table~\ref{tab:axial-ecd} shows that \model delivers the strongest performance, exceeding the second-best method by a noticeable 4.2\% margin. 
Notably, all baseline models, except SPMS, achieve accuracy below 55\%, indicating their inability to capture axial chirality due to their neglect of stereogenic axes. SPMS, which incorporates spherical stereostructural projections, can partially encode such information. 
These findings underscore the inherent complexity of the task and highlight the limitations of previous models in accurately representing complex chirality.

\textbf{ECD Spectrum Prediction.}
Table~\ref{tab:axial-ecd} reports the results on the axial chirality ECD prediction task. \model achieves the highest accuracy in predicting the symbols of peak heights, exceeding the second-best model by a substantial margin of 10.8\%. It also delivers competitive performance in predicting peak positions and numbers. Similar to OR prediction, SPMS exhibits limited performance, while all other baseline models achieve less than 55\% accuracy in symbol prediction. 
For fine-grained analysis, we present performance breakdown across seven axial-chirality subtypes in Appendix~\ref{app:subtypes}. Additionally, we evaluate general molecular property prediction tasks in Appendix~\ref{app:moleculenet} to confirm that our model maintains full expressive power, where chirality is less critical.

\begin{figure*}[htbp!]
  \centering
  \includegraphics[width=0.99\linewidth]{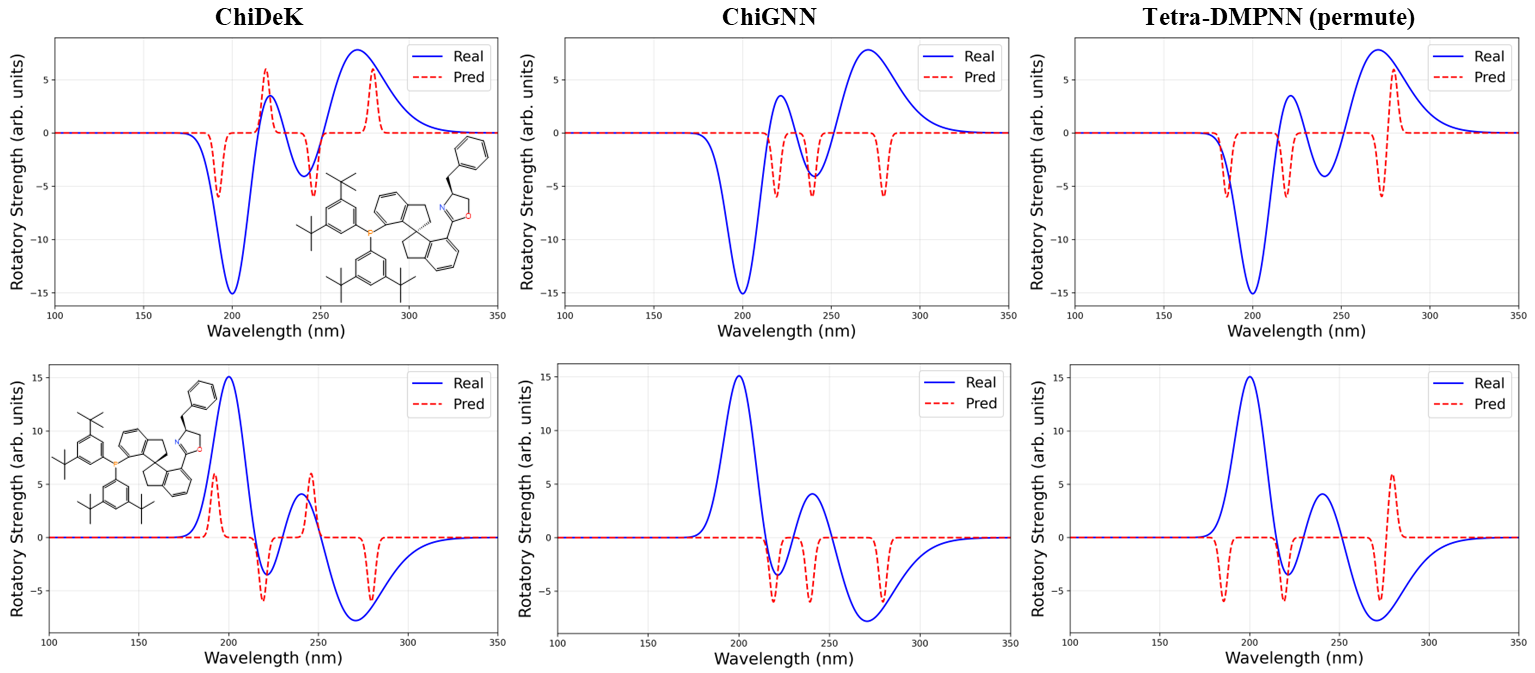}
  \caption{An axial ECD prediction example of a pair of enantiomers. Each row shows the predictions for one configuration across models, with the two rows corresponding to opposite configurations.}
  \label{fig:axial_0}
\end{figure*}

Figure~\ref{fig:axial_0} presents a representative prediction comparison. \model successfully assigns opposite peak-height symbols to opposite stereochemical configurations, thereby correctly distinguishing axial enantiomers. In contrast, ChiGNN and Tetra-DMPNN (permute) produce identical peak-height symbols across enantiomers, failing to capture stereochemical differences. These results highlight the robustness and stereochemical sensitivity of \model in handling axial chirality.
See Appendix~\ref{app:axial} for more explanations and prediction examples.

\textbf{Rotation Analysis of Axial Chirality Representations.}
To investigate how molecular representations generated by \model vary under rotation along the chiral axis, we systematically vary the torsion angle, with details provided in Appendix~\ref{app:axial_rotate}. Figure~\ref{fig:axial_vis} illustrates two representative examples. In both cases, the 18 conformers are partitioned into two opposite configurations (each comprising 9 conformers), consistent with the expected stereochemical symmetry. The trajectory reflects a transition from one configuration to its opposite and then back to the original, while the cosine similarity plots show high similarity within the same configuration and vice versa.

\begin{figure*}[htbp!]
  \centering
  \includegraphics[width=0.99\linewidth]{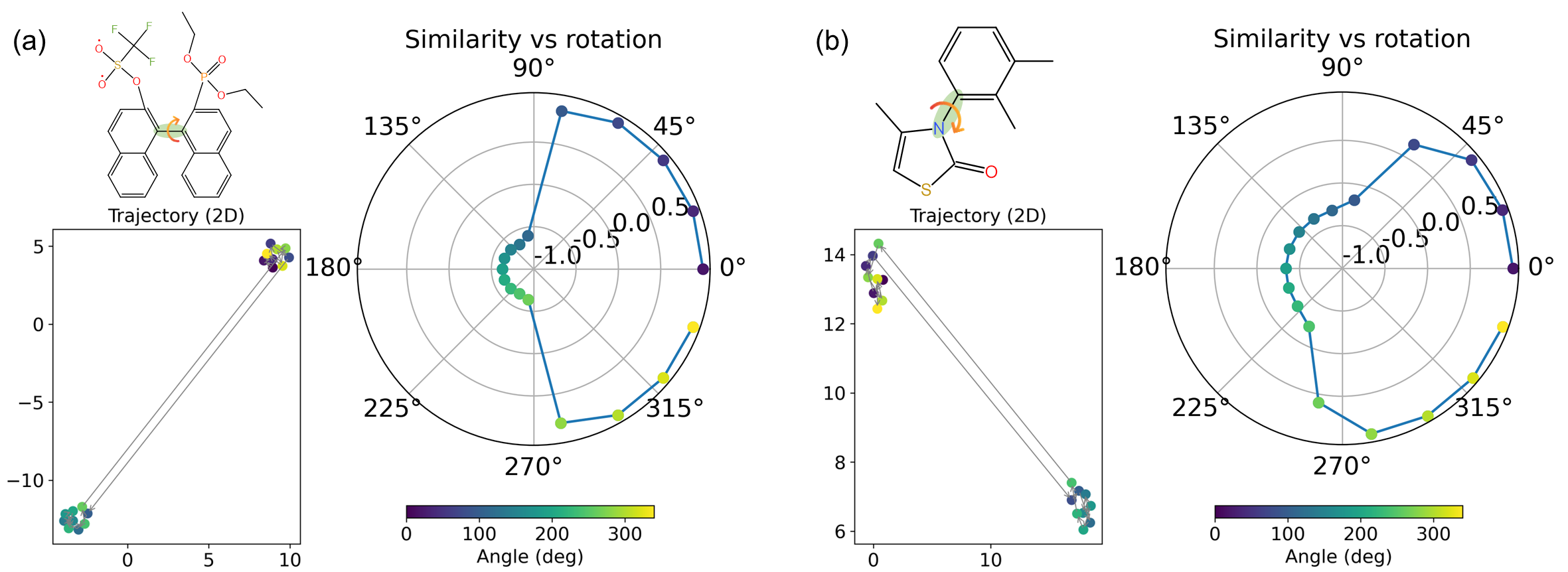}
  \caption{Visualization of how \model representations change under rotations along the chiral axis for two molecules. Each point in the trajectory plot obtained by UMAP \citep{umap} represents the learned embedding at a given rotation angle, while each point in the polar plot denotes the cosine similarity between the embedding and the reference at degree 0.}
  \label{fig:axial_vis}
\end{figure*}

\subsection{Ablation Studies}
We perform ablation studies on the axial ECD prediction task, using the accuracy of peak height symbols as the primary evaluation metric. We examine strategies to address rank deficiency (QR and Reg denote QR decomposition and regularization, respectively), types of chiral encoders, and atom separation strategies (Sep.), as shown in Table~\ref{tab:ablation}. The ablation of missing or mislabeled chiral atoms is presented in Appendix~\ref{app:random}. The comparison with ChiralFinder-enhanced model is presented in Appendix~\ref{app:chiralfinder}.

\begin{table}[htbp!]
\centering
\caption{Ablation on model components.}
\label{tab:ablation}
\begin{tabular}{lllr}
    \toprule
    \multicolumn{3}{c}{\textbf{Model Components}} & \multirow{2}{*}{\textbf{Acc (\%)} $\uparrow$} \\
    \cmidrule{1-3}
    \textbf{Chiral Encoder} & \textbf{Rank strategy} & \textbf{Separate} &  \\
    \midrule
    Linear w/o. $\mM_\mathrm{C}$ & None & \cmark & $51.2 \pm 1.2$ \\
    Linear w. $\mM_\mathrm{C}$ & None & \cmark & $68.7 \pm 0.6$ \\
    Kernel-based & QR & \xmark & $69.8 \pm 0.4$ \\
    Kernel-based & Reg & \cmark & $71.0 \pm 0.5$ \\
    Kernel-based & None & \cmark & $68.3 \pm 0.6$ \\
    \midrule
    Kernel-based & QR & \cmark & $71.2 \pm 0.6$ \\
    \bottomrule
\end{tabular}
\end{table}

\textbf{(i) Strategies to Address Rank Deficiency.}
Table~\ref{tab:ablation} compares three approaches for handling rank deficiency: regularization (Reg), QR decomposition (QR), and no intervention (None). 
When handling rank deficiency, both Reg and QR outperform None, achieving similar performance. 

\textbf{(ii) Choice of Chirality Encoder.}
We examine alternative chiral encoder designs in Table~\ref{tab:ablation}. 
A linear encoder without access to $\mM_\mathrm{C}$ performs poorly (close to random guessing), indicating that it cannot capture chirality. In contrast, our kernel-based encoder yields a higher accuracy, confirming its ability to encode stereogenic features effectively.

\textbf{(iii) Separating Chiral Atom Types.}
We test the effect of explicitly distinguishing chiral-related atoms ($\mathcal{I}_r$) from non-chiral atoms ($\mathcal{I}_n$). 
Models that separate chiral-related atoms and non-chiral atoms outperform those that treat them jointly, highlighting the importance of modeling their distinct roles within stereochemical environments.

\section{Conclusion}
We present \model (\textbf{Chi}ral \textbf{De}terminant \textbf{K}ernels), a unified framework for learning chiral molecular representations. By embedding the SE(3)-invariant chirality matrix through chiral determinant kernels and employing cross-attention between chiral and non-chiral atoms, \model explicitly captures both central and axial stereogenic features. Across multiple benchmarks, \model substantially outperforms baselines, particularly for axially chiral molecules. We further contribute a benchmark dataset for axial chirality, encompassing both ECD and OR prediction tasks. This dataset provides a foundation for future research in stereochemistry-aware machine learning.
In future work, we plan to extend our unified stereochemical representation to additional forms of chirality, such as planar and helical chirality, and evaluate its utility in broader downstream applications, including docking-score prediction for chiral ligand-protein interactions and enantioselectivity prediction in asymmetric catalysis.


\subsubsection*{Acknowledgments}
This work was supported by the National Key R\&D Program of China (No. 2023YFC2811500) and the National Natural Science Foundation of China (No. 62272300). 

\subsection*{Reproducibility Statement}
Code and data are available at \url{https://github.com/Meteor-han/ChiDeK}.

\bibliography{iclr2026_conference}
\bibliographystyle{iclr2026_conference}

\newpage
\appendix

\section{Datasets}\label{app:dataset}
\begin{table}[htbp!]
\centering
\caption{Summary of downstream datasets.}
\label{tab:datasets}
\resizebox{.98\columnwidth}{!}{
\begin{tabular}{lcccc}
\toprule
\textbf{Dataset} & \textbf{Chirality} & \textbf{Task} & \textbf{Size} & \textbf{Split} \\
\midrule
ChIRo (R/S) & Central & R/S classification & 466K conformers / 78K mols & 70/15/15 \\
ChIRo (Ranking) & Central & Enantiomer ranking & 335K conformers / 69K mols & 70/15/15 \\
CMCDS & Central & ECD prediction & 22,182 conformers / mols & 80/10/10 \\
\data & Axial & ECD and OR prediction & 1,192 conformers / mols & 80/10/10 \\
\bottomrule
\end{tabular}
}
\end{table}
\subsection{Central Chirality}
We adopt datasets from ChIRo \citep{adams2022learning}: 
(i) R/S classification: the task is to classify the configuration (R or S) of a tetrahedral chiral center. The dataset is a subset of PubChem3D, comprising about 466K conformers of 78K enantiomers, each with exactly one tetrahedral chiral center. The same 70/15/15 splits are used.
(ii) Enantiomer ranking: the task is to predict docking score rankings, evaluating whether the predicted relative ordering between enantiomers matches the ground truth. This dataset includes 335K conformers of 69K enantiomers (34.5K pairs), with enantiomers kept in the same data partition. We sample one conformer for each enantiomer in the batch during training. The same 70/15/15 splits are used.

In addition, we employ the CMCDS dataset for ECD prediction \citep{spectra}, formulated as a multi-task, multi-label problem with three sub-tasks: predicting (1) the number of spectral peaks, (2) their positions, and (3) the sign of peak heights. CMCDS consists of ECD spectra for 22,190 chiral molecules (11,095 pairs), generated by large-scale theoretical calculations using Gaussian16 B.01 \citep{gaussian16}. After excluding four molecules without chiral centers, the dataset contains 22,182 conformers (11,091 pairs). We use random 80/10/10 splits while ensuring enantiomers are within the same partition. 

\subsection{Axial Chirality}
We introduce a new benchmark, \data, for electronic circular dichroism (ECD) and optical rotation (OR) prediction. Specifically, we curate 650 axially chiral molecules from ChiralFinder \citep{Shi2025ChiralFinder} and generate initial 3D conformations using RDKit \citep{rdkit}. To refine these structures, a preliminary conformational optimization is first carried out using GFN2-xTB~\citep{xtb}, which is a semiempirical quantum mechanical method. Following this, the molecular geometries are fully optimized using the Gaussian 09 package \citep{gaussian09}, employing the B3LYP functional and the 6-31G(d) basis set. 

For ECD, to obtain the spectral data, time-dependent DFT (TD-DFT) calculations are performed to compute the first 30 electronic excited states. These calculations use the long-range corrected CAM-B3LYP functional and the 6-311+G(d,p) basis set, yielding the excitation wavelengths and their corresponding rotatory strengths. ECD spectra curves are generated with Gaussian broadening (FWHM = 2/3), and an example is presented in Figure~\ref{fig:ecd}. We uniformly sample spectra at 1~nm intervals to produce sequence-form ECD representations. The labeling process follows the same procedure as CMCDS: peak positions are uniformly discretized into 20 classes, while peak heights are encoded using their sign (positive or negative). The number of peaks ranges from 0 to 6, and its distribution is illustrated in Figure~\ref{fig:peak}. For OR, we compute the optical rotation at 589.3 nm using CAM-B3LYP/6-31G**. The resulting scalar rotation value is then mapped to a binary classification label, forming a two-class prediction task. All Gaussian calculations are conducted using 400 GB of memory with 192 CPU cores.
To obtain enantiomeric counterparts, we generate opposite configurations by reflecting atomic coordinates across the z-axis. We finally get a dataset of 1,192 enantiomers (596 pairs). 
For evaluation, we adopt random 80/10/10 train/validation/test splits, ensuring that enantiomer pairs are kept within the same partition.
\begin{figure*}[h!]
  \centering
  \includegraphics[width=0.8\linewidth]{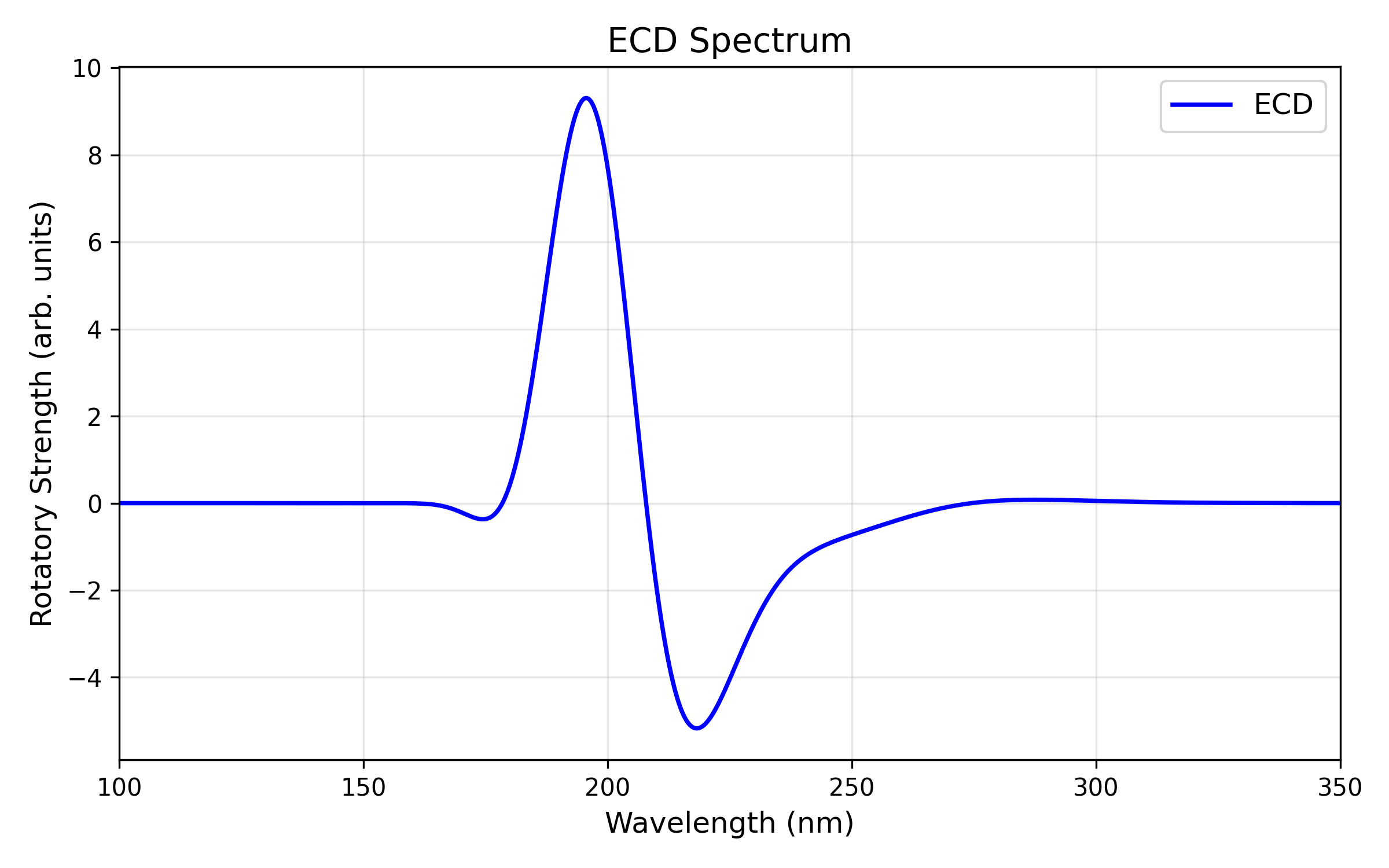}
  \caption{The ECD curve example for an axial chiral molecule.}
  \label{fig:ecd}
\end{figure*}

\begin{figure*}[h!]
  \centering
  \includegraphics[width=0.8\linewidth]{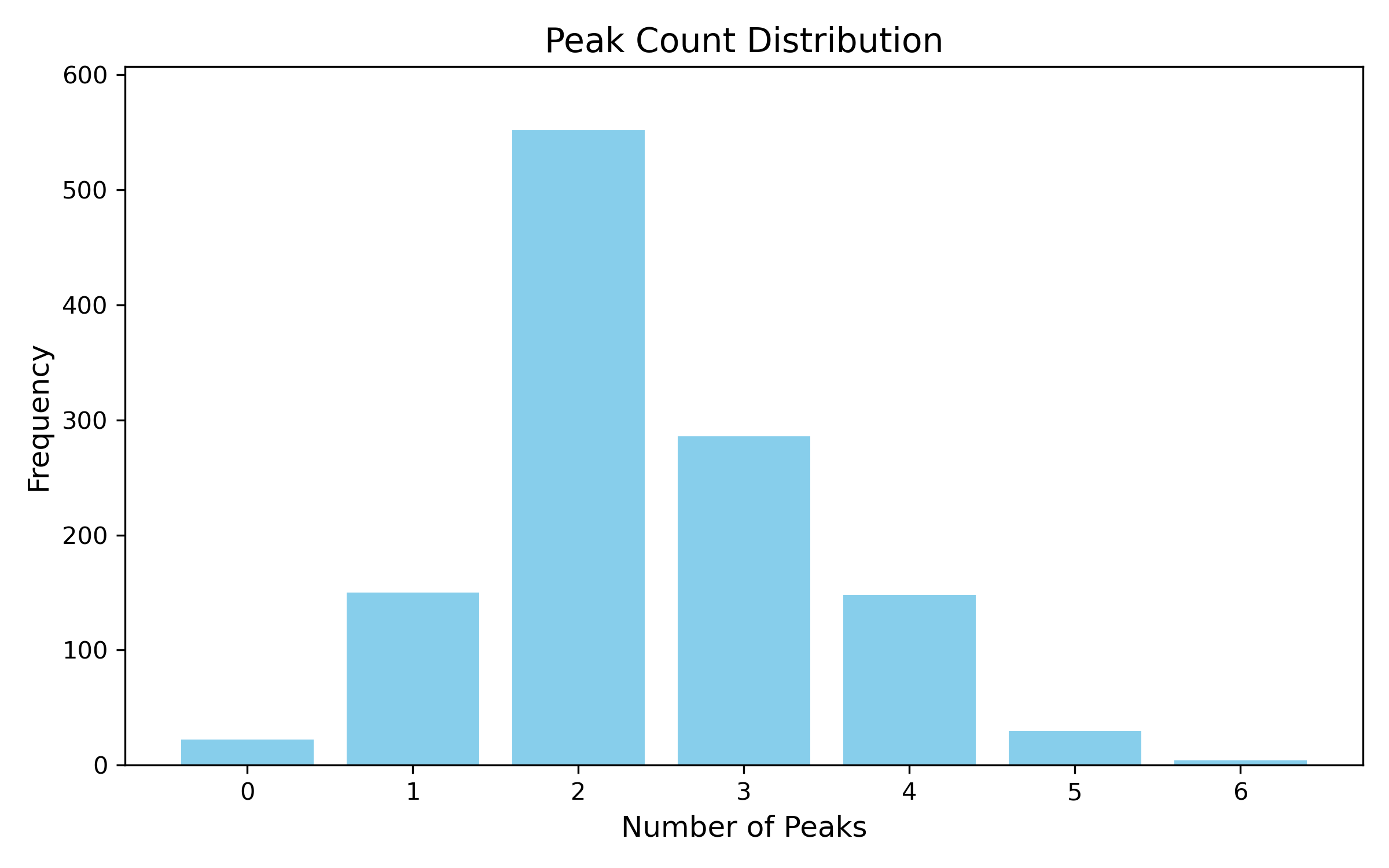}
  \caption{Distribution of ECD peak numbers in \data dataset.}
  \label{fig:peak}
\end{figure*}

\section{Additional Experiments}\label{app:results}
\subsection{Fine-Grained Performance on Axial-Chirality Subtypes}\label{app:subtypes}
To further investigate \model's performance across structural diversity, we provide a subtype-level breakdown on the test set, covering all seven axial-chirality subtypes. Table~\ref{tab:axial-sub} reports detailed results for axial ECD-height prediction.

We observe that \model achieves strong performance on most subtypes ($\geq 70\%$), while Heterobiaryl (C–N) and Heterobiaryl (C–B) show lower performance ($\leq 60\%$), highlighting potential directions for future improvement.
\begin{table}[htbp!]
\caption{Performance breakdown of seven subtypes for the axial ECD-height task.}
\label{tab:axial-sub}
\begin{center}
\begin{tabular}{lrrr}
    \toprule
    Subtype & \# Enantiomers & \# Height symbols & Acc (\%) $\uparrow$ \\
    \midrule
    Chiral atom pair & 10 & 26 & $71.5 \pm 1.8$ \\
    Allene-like & 2 & 4 & $80.0 \pm 9.9$ \\
    Spiral atom and chain & 4 & 8 & $90.0 \pm 5.0$ \\
    Biaryl & 70 & 194 & $72.9 \pm 0.5$ \\
    Heterobiaryl (C-N) & 12 & 20 & $57.0 \pm 4.0$ \\
    Heterobiaryl (C-B) & 6 & 16 & $60.0 \pm 3.1$ \\
    Nonbiaryl & 16 & 46 & $70.0 \pm 1.6$ \\
    \midrule
    Total/Average & 120 & 314 & $71.2 \pm 0.6$ \\
    \bottomrule
\end{tabular}
\end{center}
\end{table}

\subsection{Evaluation on General Molecular Property Prediction}\label{app:moleculenet}
To assess the broader applicability of \model beyond chirality-sensitive tasks, we evaluate the model on several moderate-sized MoleculeNet datasets \cite{moleculenet} where chirality is less critical, including FreeSolv, BACE, BBBP, ClinTox, and SIDER. These benchmarks test general molecular property prediction rather than chiral discrimination.

As summarized in Table~\ref{tab:none_chiral}, \model achieves performance comparable to standard GNNs without pretraining (D-MPNN, Attentive FP). While it does not surpass highly optimized SOTA models on these general tasks, the results indicate that \model introduces no harmful inductive bias and retains broad representational capacity for generic molecular property prediction.
\begin{table}[htbp!]
\caption{Results on less chiral-related MoleculeNet datasets.}
\label{tab:none_chiral}
\begin{center}
\begin{tabular}{lrrrrr}
    \toprule
    Metric & \multicolumn{4}{c}{ROC-AUC (\%) $\uparrow$} & RMSE $\downarrow$ \\
    \cmidrule(lr){1-1}
    \cmidrule(lr){2-5}
    \cmidrule(lr){6-6}
    Datasets & \textbf{BBBP} & \textbf{BACE} & \textbf{ClinTox} & \textbf{SIDER} & \textbf{FreeSolv} \\
    \# Molecules & 2039 & 1513 & 1478 & 1427 & 642 \\
    \# Tasks & 1 & 1 & 2 & 27 & 1 \\
    \midrule
    \textit{W/o. pre-training} \\
    \midrule
    D-MPNN & $71.0 \pm 0.3$ & $80.9 \pm 0.6$ & $90.6 \pm 0.6$ & $57.0 \pm 0.7$ & $2.082 \pm 0.082$ \\
    Attentive FP & $64.3 \pm 1.8$ & $78.4 \pm 0.0$ & $84.7 \pm 0.3$ & $60.6 \pm 3.2$ & $2.073 \pm 0.183$ \\
    \model (Ours) & $68.9 \pm 0.8$ & $78.1 \pm 0.9$ & $79.2 \pm 1.1$ & $58.3 \pm 0.8$ & $2.412 \pm 0.109$ \\
    \midrule
    \textit{W. pre-training} \\
    \midrule
    PretrainGNN & $68.7 \pm 1.3$ & $84.5 \pm 0.7$ & $72.6 \pm 1.5$ & $62.7 \pm 0.8$ & $2.764 \pm 0.002$ \\
    Uni-Mol & $72.9 \pm 0.6$ & $85.7 \pm 0.2$ & $91.9 \pm 1.8$ & $65.9 \pm 1.3$ & $1.480 \pm 0.048$ \\
    \bottomrule
\end{tabular}
\end{center}
\end{table}

\subsection{Robustness to Noisy or Incomplete Chirality Annotations}\label{app:random}
To evaluate \model's robustness to imperfect chirality annotations, we conduct experiments on both central and axial chirality.

\textbf{Central chirality (R/S classification).}
We systematically introduce noise by removing or mislabeling (1:1) chiral atoms in 2.5\%, 5\%, 7.5\%, and 10\% of molecules. For mislabeling, chiral centers are replaced with non-chiral atoms; for missing centers, the labels are simply removed. Table~\ref{tab:central-random} shows a smooth and gradual degradation rather than a sharp collapse. With moderate levels of noise, the model maintains strong accuracy, demonstrating that the cross-attention architecture provides meaningful tolerance to incomplete or partially incorrect stereochemical inputs. These results confirm that \model is robust to moderate annotation errors, while also showing that reliable chiral identification naturally leads to stronger performance.
\begin{table}[htbp!]
\caption{Results of central R/S classification with different ratios of noise.}
\label{tab:central-random}
\begin{center}
\begin{tabular}{lrrrrr}
    \toprule
    Ratio & W/o. noise & 2.5\% & 5\% & 7.5\% & 10\% \\
    \midrule
    \model & $99.8 \pm 0.1$ & $99.1 \pm 0.0$ & $98.2 \pm 0.0$ & $96.9 \pm 0.1$ & $96.1 \pm 0.2$ \\
    \bottomrule
\end{tabular}
\end{center}
\end{table}

\textbf{Axial chirality (ECD-height prediction).}
We evaluate the effect of removing ground-truth axial labels (with noise) on ECD-height prediction performance. In this setting, the average coverage (percentage of identified chiral atoms covering true chiral atoms) is 0.924, and the average IoU (intersection over union) between identified and true chiral atoms is 0.645. Similar to central chirality, removing ground-truth guidance reduces performance as shown in Table~\ref{tab:axial-random}, demonstrating that reliable chiral identification is essential for optimal model performance.
\begin{table}[htbp!]
\caption{Results of axial ECD-height prediction with or without noise.}
\label{tab:axial-random}
\begin{center}
\begin{tabular}{lrr}
    \toprule
    Ratio & W/o. noise & W. noise \\
    \midrule
    \model & $71.2 \pm 0.6$ & $65.8 \pm 0.4$ \\
    \bottomrule
\end{tabular}
\end{center}
\end{table}

\subsection{Comparison with ChiralFinder-enhanced Model}\label{app:chiralfinder}
We integrate the ChiralFinder-derived chirality matrix into ChIRo by concatenating the matrix features with the atomic embeddings (using zero vectors for non-chiral atoms) and evaluate this augmented model on the same axial-chirality benchmarks.

As shown in Table~\ref{tab:chiralfinder}, the inclusion of the chirality matrix yields a clear improvement over the original ChIRo. Nevertheless, \model continues to outperform both variants. This result indicates that while the chirality matrix is beneficial, \model provides stronger representational capacity for modeling axial chirality.
\begin{table}[htbp!]
\caption{Comparison with ChIRo enhanced by the chirality matrix from ChiralFinder.}
\label{tab:chiralfinder}
\begin{center}
\begin{tabular}{lrr}
    \toprule
    Method & Rotation (\%) & Symbol (\%) \\
    \midrule
    ChIRo & $50.0 \pm 0.1$ & $51.1 \pm 0.3$ \\
    ChIRo + Chirality matrix & $62.1 \pm 0.3$ & $61.8 \pm 1.0$ \\
    \model & $69.2 \pm 0.5$ & $71.2 \pm 0.6$ \\
    \bottomrule
\end{tabular}
\end{center}
\end{table}

\section{Chiral Cross-attention}\label{app:cross}
The learnable pairwise bias $p_{ij}^{{(\ell)}}$ is updated by:
\begin{equation}\label{equ:pairwise_bias}
    p_{ij}^{(\ell+1)} 
    = \frac{\mQ_i^{(\ell)} \left(\mK_j^{(\ell)}\right)^\top}{\sqrt{k}} 
    + p_{ij}^{{(\ell)}},
\end{equation}
where $k$ is the hidden dimension. The attention outputs are then computed as
\begin{equation}\label{equ:chiral_attention}
    \mathrm{Attention}\left(\mQ_i^{(\ell)}, \mK^{(\ell)}, \mV^{(\ell)}\right)
    = \softmax \left( 
        \frac{\mQ_i^{(\ell)} \left(\mK^{(\ell)}\right)^\top}{\sqrt{k}} 
        + \vp_{i}^{(\ell-1)}
    \right)\mV^{(\ell)}.
\end{equation}
Finally, the outputs of $L$ chiral cross-attention layers are aggregated by average pooling to get 
\begin{equation}
    \mH_\mathrm{global} 
    = \frac{1}{|\mathcal{I}_c|}\sum_{i\in\mathcal{I}_c} \mH_{c, i}^{(L)}.
\end{equation}
This global representation serves as the input for downstream chirality property prediction.

\section{Implementation Details}\label{app:details}
Whenever possible, baseline results are taken directly from the original publications; otherwise, we reproduce them using publicly available code. All results are reported over five test folds or five random runs.
\subsection{Initialization of Atoms}\label{app:atom}
\begin{wrapfigure}{r}{0.28\textwidth}
  \vspace{-1.0em}
  \centering
  \includegraphics[width=0.8\linewidth]{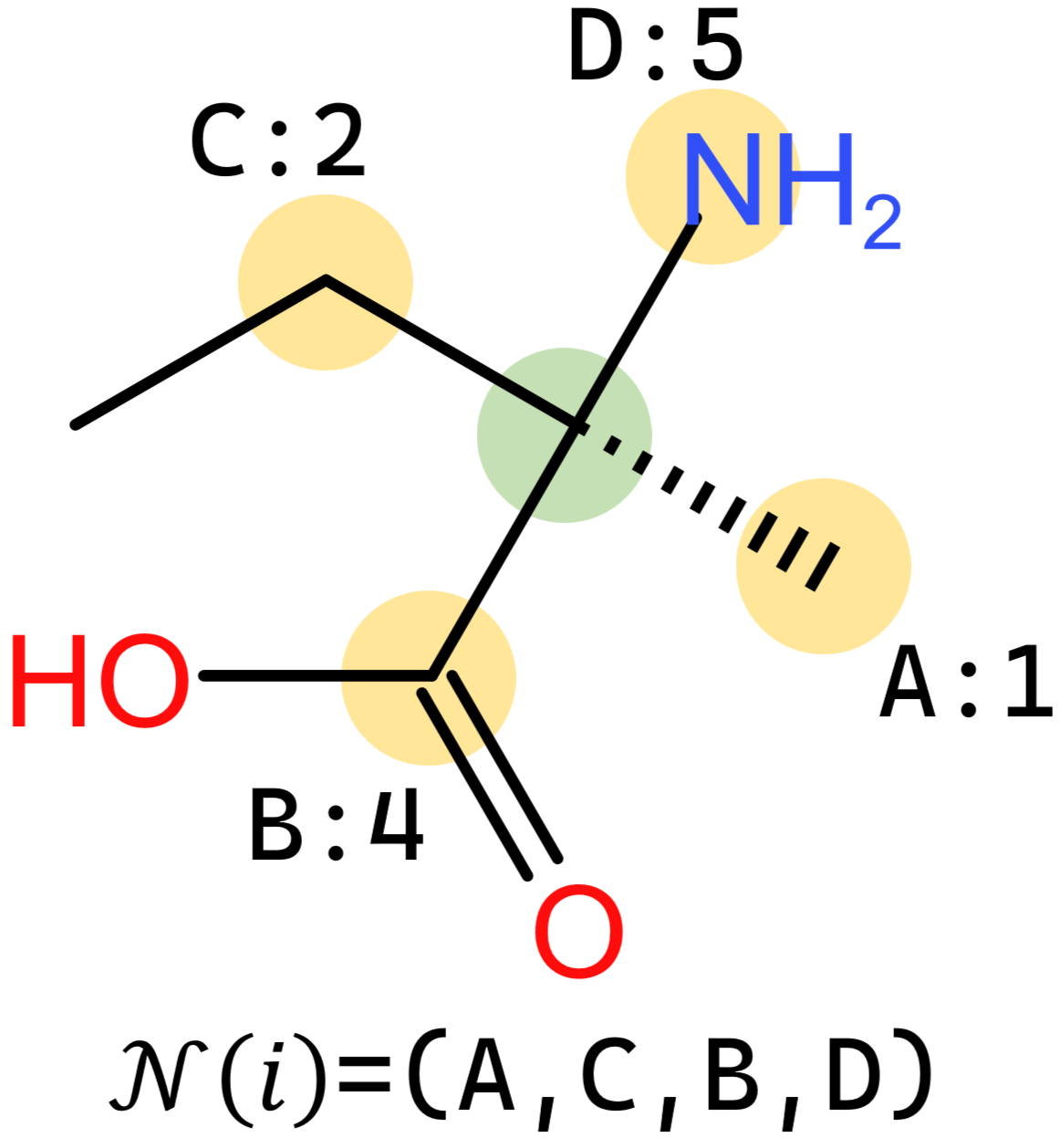}
  \caption{Ordering chiral-related atoms.}
  \label{fig:cip}
\end{wrapfigure}
\paragraph{Identification of Chiral and Chiral-related Atoms.} 
For centrally chiral molecules, we use RDKit \citep{rdkit} to detect chiral centers and their corresponding neighbors as chiral-related atoms. For axially chiral molecules, the chiral axis is labeled by chemists, while chiral-related atoms are determined using ChiralFinder \citep{Shi2025ChiralFinder}. If ChiralFinder fails, we fall back to selecting up to four nearest neighbors as chiral-related atoms. 

An example for ordering chiral-related atoms $\mathcal{N}(i)$ for atom $i$ is presented in Figure~\ref{fig:cip}. After identifying the chiral atom (in green), suppose the indices of its chiral-related atoms (in orange) are A, B, C, and D, respectively. We use RDKit to calculate the CIP value for each atom (1, 4, 2, 5, respectively) and sort the indices in ascending order of their CIP priorities.

\paragraph{Initial Atom Features.}  
Each atom is initialized with a 52-dimensional feature vector, consistent with ChIRo \citep{adams2022learning}, including atom type, degree, formal charge, number of hydrogens, and hybridization state. Chiral atoms are additionally associated with their chirality matrix, while non-chiral and chiral-related atoms share the same feature design. The embedding layers are implemented as multi-layer perceptrons.

\subsection{Hyperparameter Optimization}\label{app:parameter}
We tune hyperparameters via grid search (Table~\ref{tab:p_opt}). Training epochs are set to 10 for R/S configuration classification, 200 for enantiomer ranking, and 50 for both central and axial ECD prediction. We use Adam optimizer \citep{adam} and adopt a cosine learning rate schedule with a minimum learning rate of $0.1\times$ the initial value. As for the enantiomer ranking task, we follow ChIRo \citep{adams2022learning} and add the margin ranking loss.
\begin{table}[htbp!]
\caption{Hyperparameter search space for \model.}
\label{tab:p_opt}
\begin{center}
\begin{tabular}{cc}
    \toprule
    \textbf{Hyperparameter} & \textbf{Search Space} \\
    \midrule
    \# Chiral Transformer Layers & $\{4,8,12,16\}$ \\
    \# Chiral Transformer Heads & $\{2,4,8\}$ \\
    Hidden Dimension & $\{64,128,256,512\}$ \\
    Projection Dimension & $\{32,64,128,256,512\}$ \\
    Rank Strategy & $\{\text{QR, Reg}\}$ \\
    Learning Rate & \{1e-4,5e-4\} \\
    Batch Size & $\{16,32,64,128,256\}$ \\
    Weight of MarginRankingLoss & \{0.1, 0.5, 1.0\} \\
    \bottomrule
\end{tabular}
\end{center}
\end{table}

\section{Analysis of Axial ECD Prediction}\label{app:axial}
\subsection{Baselines Fail to Learning Axial Chirality}\label{app:axial_fail}
Axial chirality presents unique challenges for ECD prediction. Our results demonstrate that \model achieves strong performance in this setting, primarily due to its explicit encoding of chirality through the proposed chirality determinant kernels. By directly modeling stereochemically relevant features, \model is capable of distinguishing axial enantiomers and accurately predicting peak height symbols, a property that is essential for practical chirality-sensitive spectrum prediction.

By contrast, all baseline methods except SPMS perform poorly on axially chiral molecules, with each exhibiting specific limitations for distinct reasons, as analyzed below:
\begin{itemize}
    \item \textbf{DimeNet++}: As an E(3)-invariant model that relies only on distances and angles, DimeNet++ is fundamentally incapable of distinguishing mirror-related structures. Its poor performance is therefore expected.
    \item \textbf{Tetra-DMPNN}: This model augments 2D GNNs with a central-chirality readout, but has no mechanism to capture axial chirality. As a result, it fails on this task.
    \item \textbf{SphereNet} and \textbf{ChIRo}: In principle, these models incorporate torsional information and should have some sensitivity to stereochemistry. However, axial chirality involves subtle global spatial arrangements that are not well captured by their architectures, leading to limited performance.
    \item \textbf{ECDFormer}: While this model introduces additional chiral encoding, it is designed for central chirality and lacks the necessary generalization to axial cases.
    \item \textbf{ChiGNN}: This method explicitly resolves atom permutations around tetrahedral centers, making it effective for central chirality. However, it does not generalize to axial stereogenic axes and thus completely fails in this setting.
\end{itemize}

\subsection{ECD Prediction Examples}\label{app:axial_exp}
\begin{figure*}[h!]
  \centering
  \includegraphics[width=\linewidth]{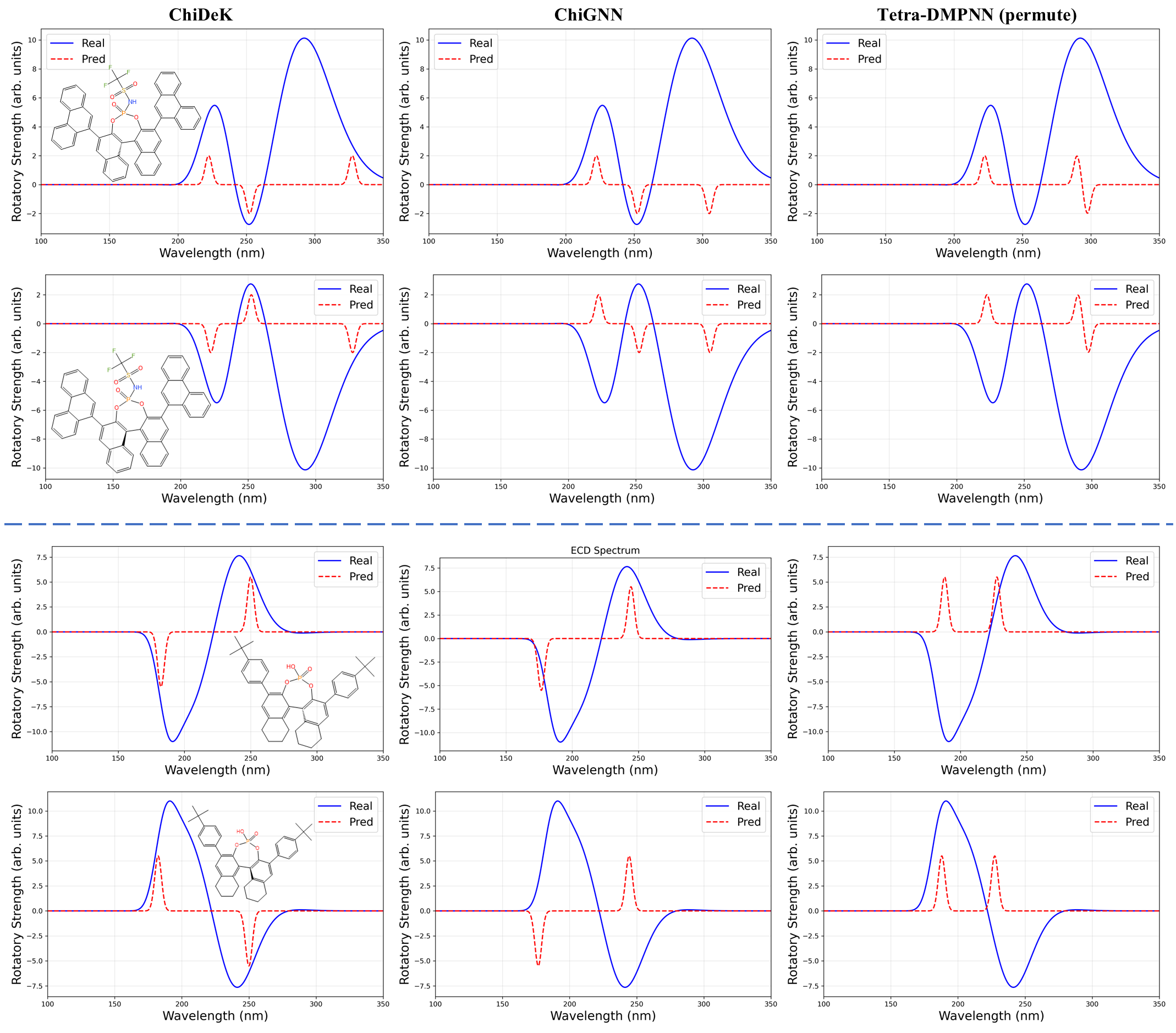}
  \caption{Axial ECD prediction examples of two pairs of enantiomers.}
  \label{fig:axial_1}
\end{figure*}
Figure~\ref{fig:axial_1} presents two additional representative prediction comparisons. 
\model successfully assigns opposite peak-height symbols to opposite stereochemical configurations, while ChiGNN and Tetra-DMPNN (permutation variant) produce identical peak-height symbols across enantiomers, failing to capture stereochemical differences.

\subsection{Details for Rotating Chiral Axis}\label{app:axial_rotate}
We generate 18 conformers by rotating the torsion angle in 20-degree increments over 360 degrees. Each conformer corresponds to distinct coordinates, yielding different chirality matrices and thus different representations. We extract the embeddings before the predictor, apply UMAP \citep{umap} for visualization, and record the trajectory as the torsion angle progresses from 0 to 340 degrees. Additionally, we calculate the cosine similarity between each embedding and the reference conformer at degree 0, and visualize the results using polar plots.

\section{Proof Details}\label{app:proof}
\subsection{Chirality Matrix}\label{app:proof_1}
\begin{proof}[Proof of Lemma~\ref{lemma:enantiomers}]
Let $(\vz_1,\vz_2)$ be a pair of enantiomers that differ only in the stereochemical configuration of atom $i$. By definition of enantiomers there exists an isometry $T$ of $\mathbb{R}^3$ that maps the atomic coordinates of $\vz_1$ to those of $\vz_2$ and which is a composition of a rigid-body motion and a reflection; equivalently, one may write $T = R_2\circ R_1$ with $R_1\in\mathrm{SE}(3)$ and $R_2\in O^{-}(3)$ (an orthogonal transformation with determinant $-1$). In particular, the local coordinate frame used to form the chirality matrix $\mM_{\mathrm{C}}(i)$ for $\vz_2$ is obtained from that for $\vz_1$ by applying the linear part of $T$, which has determinant $-1$.

Since $i$ is a chiral atom, its substituent vectors (used to build $\mM_{\mathrm{C}}(i)$) are non-coplanar, hence
$
\det\big(\mM_{\mathrm{C}}(i)\big)\neq 0.
$
The sign of the determinant is well-defined.

Applying Proposition~\ref{prop:invariance}, first note that the rigid-body component $R_1\in\mathrm{SE}(3)$ leaves the determinant invariant:
\begin{align}
\det\big(\mM_{\mathrm{C}}^{(\vz_1)}(i)\big)
= \det\big(R_1 \mM_{\mathrm{C}}^{(\vz_1)}(i)\big).
\end{align}

Then the reflection $R_2\in O^{-}(3)$ flips the sign of the determinant:
\begin{align}
\det\big(R_2 R_1 \mM_{\mathrm{C}}^{(\vz_1)}(i)\big)
= -\det\big(R_1 \mM_{\mathrm{C}}^{(\vz_1)}(i)\big)
= -\det\big(\mM_{\mathrm{C}}^{(\vz_1)}(i)\big).
\end{align}

$R_2 R_1 \mM_{\mathrm{C}}^{(\vz_1)}(i)$ is exactly the chirality matrix for atom $i$ in the enantiomer $\vz_2$, i.e.,
\begin{align}
\mM_{\mathrm{C}}^{(\vz_2)}(i) = R_2 R_1 \mM_{\mathrm{C}}^{(\vz_1)}(i).
\end{align}

Therefore,
\begin{align}
\det\big(\mM_{\mathrm{C}}^{(\vz_2)}(i)\big) = -\det\big(\mM_{\mathrm{C}}^{(\vz_1)}(i)\big).
\end{align}

It follows that the determinants for the two enantiomers have opposite signs. By the sign convention in the lemma (positive determinant $\mapsto$ `R', negative determinant $\mapsto$ `S'), the stereochemical configuration of $i$ is `R' when $\det(\mM_{\mathrm{C}}(i))>0$ and `S' when $\det(\mM_{\mathrm{C}}(i))<0$.
\end{proof}
Figure~\ref{fig:volumn} illustrates how reflection flips the sign of the volume (determinant), yielding an opposite configuration.

\begin{figure*}[h!]
  \centering
  \includegraphics[width=\linewidth]{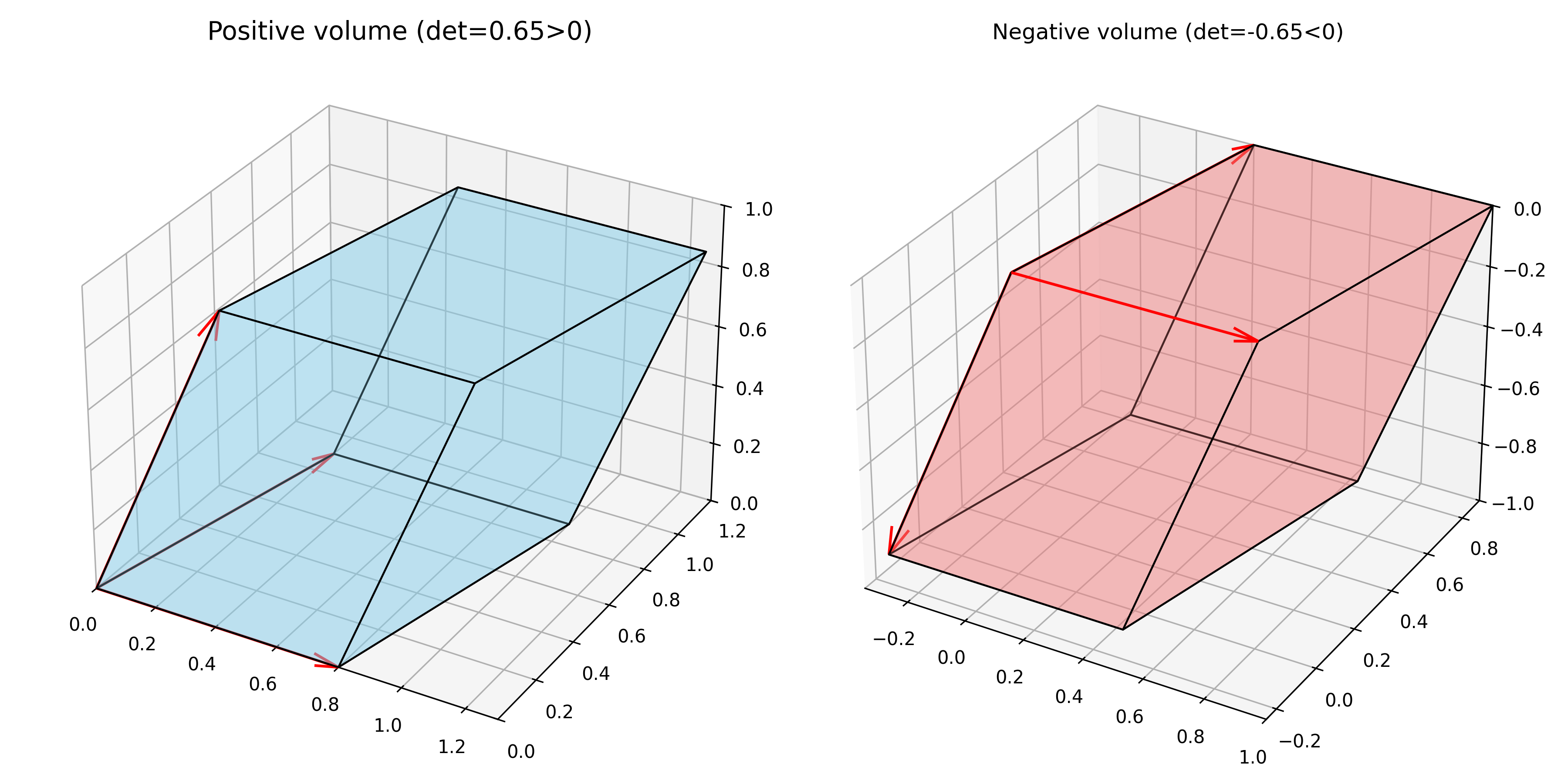}
  \caption{An example of reflection that flips the sign of volume.}
  \label{fig:volumn}
\end{figure*}

\subsection{Chiral Determinant Kernel}\label{app:proof_2}
\begin{proof}[Proof of Lemma~\ref{lemma:generalized_chirality}]
Let $\mM=\mM_{\mathrm{C}}(i)\in\mathbb{R}^{3\times3}$ and $\mW\in\mathbb{R}^{d_p\times 3}$ be full column rank (so the rank is 3). Define $\mO:=\mW\mM\in\mathbb{R}^{d_p\times 3}$ and perform QR decomposition:
\begin{align}
\mO=\mQ\mR,\quad \mQ^{\top}\mQ=\mI_3,\quad \mQ\in\mathbb{R}^{d_p\times 3},\quad 
\mR\in\mathbb{R}^{3\times 3}\ \text{upper-triangular}.
\end{align}
Unlike the conventional QR convention, here we allow the diagonal entries of $\mR$ to be negative, as is the case in numerical packages such as PyTorch. Consequently, the sign of $\det(\mR)$ is not predetermined but remains well defined.

Consider the Gram matrix
\begin{align}
\mG := \mW^{\top}\mW.
\end{align}

Since $\mW$ has full column rank, $\mG\succ 0$ and $\det(\mG)>0$. We have
\begin{align}
\mR^{\top}\mR = \mO^{\top}\mO = (\mW\mM)^{\top}(\mW\mM)
= \mM^{\top}\mW^{\top}\mW\mM = \mM^{\top}\mG\mM.
\end{align}

Taking determinants gives
\begin{align}
\det(\mR)^{2} = \det(\mM^{\top}\mG\mM) = \det(\mM)^{2}\det(\mG).
\end{align}
Hence
\begin{equation}\label{eq:qr_abs_value}
|\det(\mR)| = |\det(\mM)|\cdot \sqrt{\det(\mG)}.
\end{equation}

To recover the sign, observe that $\det(\mO)=\det(\mQ)\det(\mR)$. Since $\mO=\mW\mM$, this implies
\begin{align}
\det(\mR) = \frac{\det(\mW\mM)}{\det(\mQ)}.
\end{align}
Here $\det(\mQ)\in\{\pm1\}$ because $\mQ$ has orthonormal columns. Substituting gives
\begin{align}
\det(\mR) = \frac{\det(\mW)\det(\mM)}{\det(\mQ)}.
\end{align}
Combining with \eqref{eq:qr_abs_value}, the factor depending on $\mW$ is
\begin{align}
\alpha(\mW) := \frac{|\det(\mW)|}{|\det(\mQ)|}\,\sqrt{\det\!\big((\mW^{\top}\mW)/(\mW^{\top}\mW)\big)}.
\end{align}
Since $\det(\mQ)=\pm 1$, this ambiguity only flips the sign in a way consistent with $\det(\mM)$. Therefore, we can absorb it into the proportionality constant and write
\begin{align}
\det(\mR)=\sqrt{\det(\mG)}\,\det(\mM) = \alpha(\mW) P_{\mathrm{C}}(i),
\end{align}
with $\alpha(\mW)=\sqrt{\det(\mW^{\top}\mW)}>0$ independent of $\mM$.

Finally, the invariance properties follow directly: if $\mM\mapsto r\mM$ for $r\in\mathrm{SO}(3)$, then $\det(\mM)$ is unchanged, hence $P_{\mW}(i)$ is invariant. If $\mM\mapsto s\mM$ for $s\in O^{-}(3)$ with $\det(s)=-1$, then $\det(\mM)$ flips sign, hence $P_{\mW}(i)$ flips sign as well. Thus $P_{\mW}(i)$ inherits Proposition~\ref{prop:invariance} and Lemma~\ref{lemma:enantiomers}.
\end{proof}

\begin{figure*}[htbp!]
  \centering
  \includegraphics[width=0.8\linewidth]{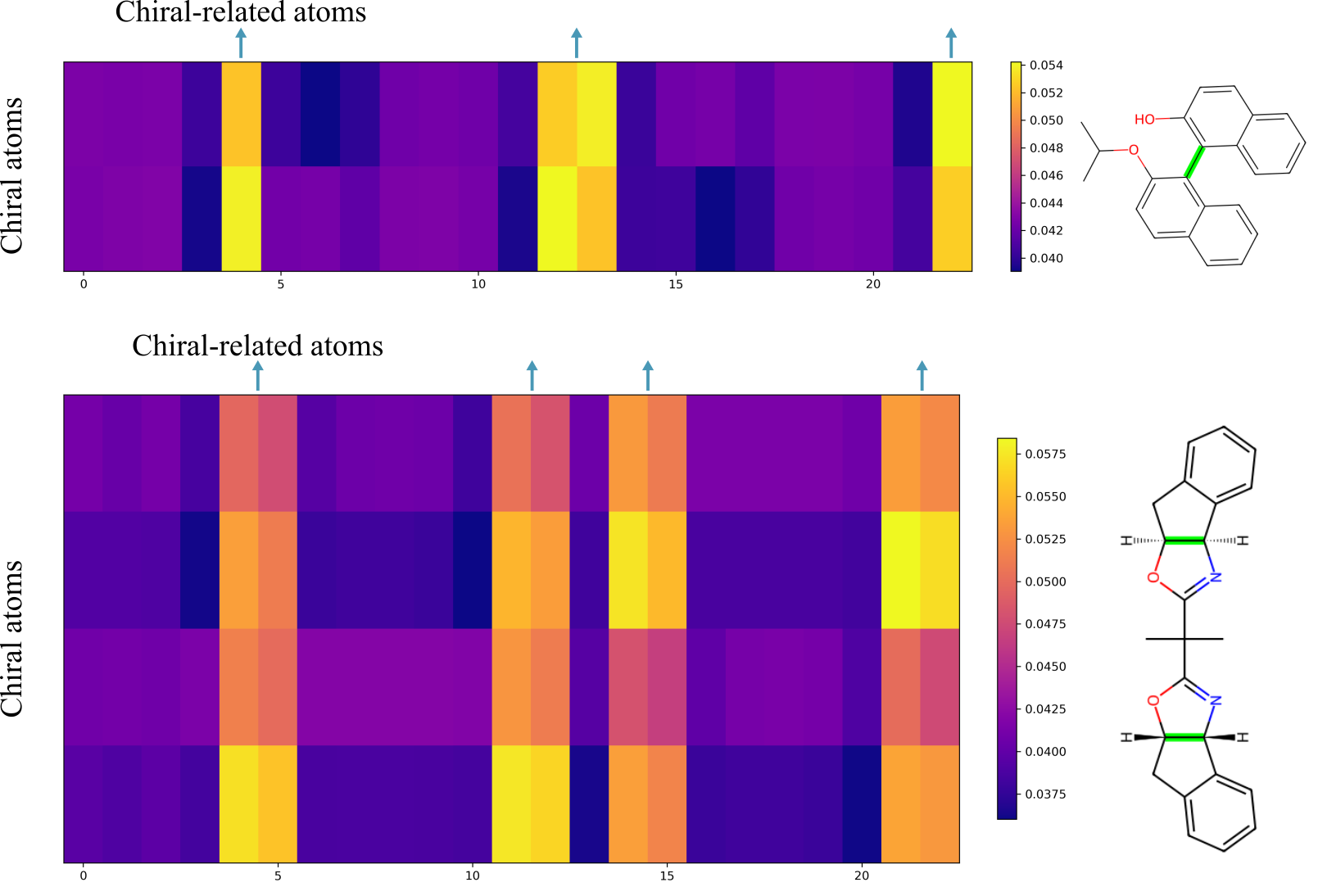}
  \caption{Two examples of cross attention weights. The chiral axis for each molecule is highlighted in green.}
  \label{fig:attn_vis}
\end{figure*}
\section{Visualization of the Chiral Transformer}\label{app:attn}
To better understand how \model leverages chiral information, we visualize the cross-attention weights between chiral atoms (used as queries) and all other atoms (used as keys and values) on the R/S classification task. Figure~\ref{fig:attn_vis} illustrates two representative test examples from the axial ECD prediction task, depicting the head-averaged cross-attention weights in the final layer of the chiral transformer. In both cases, we observe that attention weights assigned to chiral-related atoms are higher than those assigned to non-chiral atoms. This demonstrates that \model successfully prioritizes stereochemically relevant atoms when integrating local chiral information into the global molecular representation. Conversely, non-chiral atoms receive comparatively lower weights, suggesting that the model effectively distinguishes their less critical role in stereospecific interactions.

\end{document}